\documentclass[letterpaper]{article}
\usepackage[preprint]{aaai2027}
\usepackage[hyphens]{url}
\usepackage{graphicx}
\usepackage{natbib}
\usepackage{caption}
\usepackage{booktabs}
\usepackage{amsmath}
\usepackage{amssymb}
\usepackage{amsthm}
\usepackage{mathtools}
\usepackage{colortbl}
\newtheorem{proposition}{Proposition}
\newtheorem{theorem}{Theorem}
\newtheorem{lemma}{Lemma}

\title{FUSE: Feature-Wise Unified Specialization with Cross-Column Exchange for Mixed-Type Tabular Flow Matching}

\author{
Suman Cha\textsuperscript{\rm 1},
Seongchan Lee\textsuperscript{\rm 2},
Dohyun Ko\textsuperscript{\rm 1},
Hyunjoong Kim\textsuperscript{\rm 1}\corresponding
}
\affiliations{
\textsuperscript{\rm 1}Department of Statistics and Data Science, Yonsei University\\
\textsuperscript{\rm 2}Department of Mathematical Sciences, KAIST
}

\begin{document}

\renewcommand{\dbltopfraction}{0.95}
\renewcommand{\textfraction}{0.05}

\maketitle

\begin{abstract}
Generating mixed-type tabular data requires jointly modeling diverse feature distributions and their complex cross-column dependencies. Variational flow matching handles distinct endpoints via factorized distributions, yet leaves feature-specific processing and cross-column interactions implicit within a shared backbone. We introduce Feature-wise Unified Specialization with cross-column Exchange (FUSE) to explicitly separate these roles. FUSE applies separate adaptive mixture modules to numerical and categorical features, allowing each feature to combine shared specialized subnetworks, while joint attention preserves information exchange across all columns. We also characterize the excess population risk from restricted conditioning contexts and bound the continuous Wasserstein generation error by endpoint-prediction risk. Comprehensive experiments on eight tabular datasets demonstrate that FUSE achieves strong and consistent performance across distributional fidelity and downstream utility metrics.
\end{abstract}

\section{Introduction}
\label{sec:introduction}

Tabular data is one of the most prevalent data formats in machine learning applications such as healthcare, finance, and public policy \citep{gardner2023benchmarking,holzmuller2024better,little2025synthetic}. The ability to generate synthetic tabular data has become essential for data sharing, mitigating data scarcity, and enabling robust downstream analysis \citep{zhao2021ctabgan,qian2023synthcity,vanbreugel2023testing}. However, generating synthetic samples remains challenging since it requires capturing the complex joint distributions of mixed-type variables.

The challenge extends beyond the distinction between numerical and categorical variables. Columns within the same type can also exhibit different statistical structures \citep{shi2025tabdiff}. Numerical columns may vary in skewness, tail behavior, and multimodality, while categorical columns differ in cardinality and frequency concentration. Generative models therefore need to capture these column-specific distributions while preserving dependencies between the features.

Recent work has adapted flow matching to this setting \citep{jolicoeurmartineau2024forest,guzman2025efvfm}. Flow matching learns a time-dependent velocity field whose induced
ordinary differential equation transports a tractable source distribution
to the data distribution
\citep{liu2023flow,lipman2023flow,albergo2023building}. 
Variational Flow Matching (VFM) expresses the velocity through conditional inference over trajectory endpoints, while Mean-Field Variational Flow Matching (MF-VFM) makes the endpoint inference tractable by factorizing the variational endpoint distribution across variables \citep{eijkelboom2024vfm}. For mixed-type data, Exponential Family Variational Flow Matching (EF-VFM) extends this approach by assigning appropriate exponential family distributions to match the statistical properties of each variable type \citep{guzman2025efvfm}. This factorization decomposes the training objective across columns without restricting the information available to their predictors. Each endpoint factor may condition on all coordinates of the intermediate state.

While endpoint factorization aligns the objective with varying data types, the architectural challenge of processing diverse variables remains unresolved. TabbyFlow implements EF-VFM by processing column embeddings through a shared backbone \citep{guzman2025efvfm}. Although this design supports parameter sharing and encodes feature identity through column embeddings, it provides no explicit mechanism for feature-dependent transformations.

Motivated by this gap, we propose FUSE, a
Feature-wise Unified Specialization with cross-column
Exchange architecture for mixed-type tabular flow matching.
FUSE applies feature-dependent component recombination with joint
self-attention, enabling specialized computation while preserving
cross-type information exchange. Within each variable type, adaptive mixture processing forms feature-dependent combinations of shared specialized subnetworks using a differentiable aggregate-transform-recombine operator \citep{puigcerver2024soft}. Joint attention subsequently connects all column representations, preserving information exchange between numerical and categorical variables. FUSE therefore separates feature-specific processing from the conditioning information available to each endpoint predictor.

The contributions of this work are summarized as follows.

\begin{enumerate}
    \item We introduce a mixed-type flow-matching architecture that combines type-specific adaptive mixture processing with joint attention across all columns.

    \item We quantify the exact excess population risk induced by restricting the conditioning context. Under regularity conditions, we further bound the continuous-space Wasserstein generation error in terms of endpoint-prediction risk.

    \item Comprehensive experiments on eight tabular datasets show that FUSE achieves competitive or superior generation quality across diverse metrics.
\end{enumerate}
\section{Flow Matching for Mixed-Type Data}
\label{sec_mean_field_vfm}

We review the flow matching formulation for mixed-type data. Flow matching defines a continuous transport through a time-dependent velocity field, while variational flow matching represents this velocity through conditional endpoint inference. Mean-field and exponential-family extensions yield tractable objectives for numerical and categorical endpoints.

\subsection{Flow Matching}

Let $X_1^{\mathrm{num}}\in\mathbb R^{d_{\mathrm{num}}}$ denote the numerical endpoint and let $C_1^{(k)}\in\{1,\ldots,K_k\}$ denote categorical endpoint $k$ for $k=1,\ldots,d_{\mathrm{cat}}$. Using the one-hot map $e_k:\{1,\ldots,K_k\}\to\{0,1\}^{K_k}$, we write
\begin{equation*}
X_1
=
\bigl(
X_1^{\mathrm{num}},
e_1(C_1^{(1)}),\ldots,
e_{d_{\mathrm{cat}}}(C_1^{(d_{\mathrm{cat}})})
\bigr)
\in\mathbb R^D,
\end{equation*}
where $D=d_{\mathrm{num}}+\sum_{k=1}^{d_{\mathrm{cat}}}K_{k}.$ Let $X_0\sim p_0=\mathcal N(0,I_D)$ be independent of $X_1$. The linear conditional path is
\begin{equation}
    X_t=(1-t)X_0+tX_1,
    \qquad t\in[0,1).
\label{eq:linear_conditional_path}
\end{equation}

Flow matching estimates the marginal velocity of this path and generates samples by integrating the learned ordinary differential equation \citep{lipman2023flow,albergo2023stochastic}. For a fixed endpoint $x_1$, the conditional velocity is
$
u_t(x\mid x_1)
=
(x_1-x)/(1-t).
$
The marginal velocity is its conditional expectation given $X_t=x$
\begin{equation*}
\begin{aligned}
u_t(x)
&=
\mathbb E
\left[
u_t(x\mid X_1)
\mid X_t=x
\right] \\
&=
\frac{m_t(x)-x}{1-t},
\qquad
m_t(x)
=
\mathbb E[X_1\mid X_t=x].
\end{aligned}
\end{equation*}
Thus, the endpoint posterior determines the marginal velocity only through its mean $m_{t}(x)$.

\subsection{Variational Endpoint Inference}

Let $p_{1\mid t}(x_1\mid x)$ denote the endpoint posterior at time $t$. Variational Flow Matching \citep{eijkelboom2024vfm} approximates this posterior with $q_{\theta,t}(x_1\mid x)$ by minimizing
\begin{equation*}
\mathcal L_{\mathrm{VFM}}(\theta)
=
-\mathbb E
\left[
\log q_{\theta,t}(X_1\mid X_t)
\right],
\end{equation*}
where the expectation is taken over $t\sim\operatorname{Unif}(0,1)$, $X_0\sim p_0$, and $X_1\sim p_1$. The expectation of the variational endpoint distribution is
\begin{equation*}
\widehat m_{\theta,t}(x)
=
\mathbb E_{q_{\theta,t}(\cdot\mid x)}[X_1],
\end{equation*}
which induces the vector field
\begin{equation*}
v_{\theta,t}(x)
=
\frac{\widehat m_{\theta,t}(x)-x}{1-t}.
\end{equation*}

Mean-Field Variational Flow Matching imposes a product form on the variational endpoint distribution. Exponential Family Variational Flow Matching \citep{guzman2025efvfm} extends this construction to mixed-type data by assigning an appropriate exponential-family distribution to each endpoint column. Combining these two ingredients, we use Gaussian factors for numerical endpoints and categorical factors for categorical endpoints
\begin{equation}
\begin{aligned}
q_{\theta,t}(x_1\mid x)
&=
\prod_{j=1}^{d_{\mathrm{num}}}
\mathcal N\!\left(
x_{1,j}^{\mathrm{num}}
\mid
\mu_{\theta,j,t}(x),
\nu_t
\right) \\
&\quad\times
\prod_{k=1}^{d_{\mathrm{cat}}}
\operatorname{Cat}\big(
c_1^{(k)}
\mid
\pi_{\theta,k,t}(x)
\big).
\end{aligned}
\label{eq:endpoint_family}
\end{equation}
where $\nu_t>0$ is a predetermined variance. Under this distribution, the conditional expectation of each numerical endpoint is $\mu_{\theta,j,t}(x)$. For a one-hot categorical block, its conditional expectation is the corresponding probability vector $\pi_{\theta,k,t}(x)$. Up to additive terms independent of $\theta$, the negative log-likelihood is then
\begin{equation}
\begin{aligned}
\mathcal L_{\mathrm{end}}(\theta)
&=
\mathbb E\Bigg[
\frac{1}{2\nu_t}
\left\|
X_1^{\mathrm{num}}
-
\mu_{\theta,t}(X_t)
\right\|_2^2
\\[-1mm]
&\qquad\qquad
-
\sum_{k=1}^{d_{\mathrm{cat}}}
\log
\pi_{\theta,k,t}(X_t)_{C_1^{(k)}}
\Bigg].
\end{aligned}
\label{eq:endpoint_training_objective}
\end{equation}
The mean-field assumption factorizes $q_{\theta,t}(\cdot\mid x)$ across endpoint columns, but places no restriction on the conditioning argument $x$. Consequently, each column-wise parameter, including $\mu_{\theta,j,t}(x)$ and $\pi_{\theta,k,t}(x)$, may depend on all coordinates of $x$. The variational objective does not specify how computation should be shared effectively among these column-wise parameter functions.
\section{Proposed Architecture}
\label{sec:proposed_architecture}

\begin{figure*}[t!]
    \centering
    \includegraphics[width=1.0\textwidth]{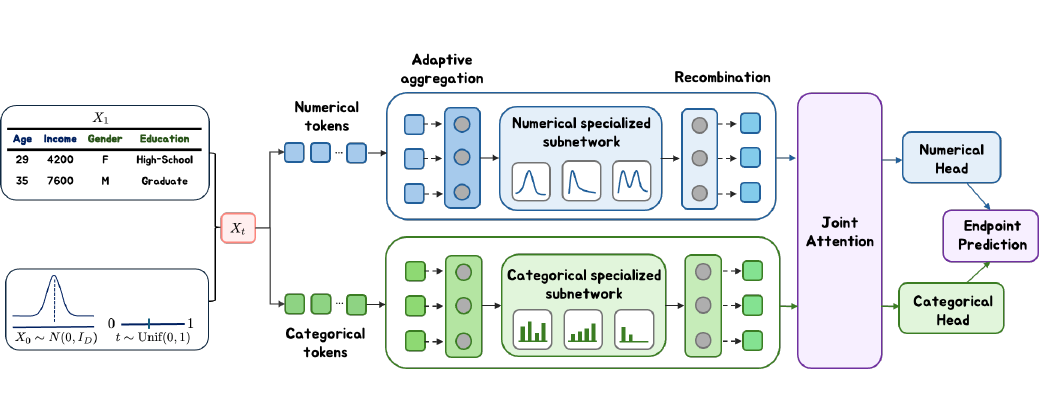}
    \vspace{-4em}
    \caption{Overview of the FUSE architecture. Type-specific adaptive mixture processing transforms numerical and categorical tokens into specialized representations, which then interact via global attention to capture multivariate feature dependencies.}
    \label{fig:model_architecture}
\end{figure*}

While the VFM-based approaches in Section~\ref{sec_mean_field_vfm} specify the factorized endpoint distributions, their effective neural parameterization remains open. For mixed-type tabular data, a fully shared backbone lacks feature-specific computation, whereas independent networks sacrifice beneficial parameter sharing. To resolve this tradeoff, we propose FUSE, an architecture interleaving type-specific adaptive mixture processing with joint attention, enabling feature specialization while maintaining cross-column dependencies.

\subsection{Adaptive Mixture Processing}

Let $X_t = (X_t^{\mathrm{num}}, X_t^{\mathrm{cat}})$ denote the intermediate state at time $t$. We project each feature into an initial token $h_{r,j,t}^{(0)} \in \mathbb{R}^{d_h}$ for type $r \in \{\mathrm{num}, \mathrm{cat}\}$ via feature-specific linear layers, learnable embeddings, and a time embedding $\psi(t)$. This yields embedding matrices $H_{\mathrm{num},t}^{(0)} \in \mathbb{R}^{d_{\mathrm{num}}\times d_h}$ and $H_{\mathrm{cat},t}^{(0)} \in \mathbb{R}^{d_{\mathrm{cat}}\times d_h}$.

At layer $\ell$, the adaptive mixture module for type $r$ employs $M_r$ parallel subnetworks to process $S_r = M_r P_r$ latent components. Using the projection $\Phi_r^{(\ell)}\in\mathbb{R}^{d_h\times S_r}$ and tokens $\overline H_{r,t}^{(\ell)} = \operatorname{LN} (H_{r,t}^{(\ell)})$, we compute the alignment scores as 
\begin{equation}
R_{r,t}^{(\ell)} = \frac{\overline H_{r,t}^{(\ell)} \Phi_r^{(\ell)}}{\sqrt{d_h}} \in \mathbb{R}^{n_r\times S_r}.
\label{eq:mixture_scores}
\end{equation}
The score matrix is then normalized along two axes. The aggregation weights
$
D_{r,t}^{(\ell)}
=
\operatorname{softmax}
(R_{r,t}^{(\ell)})
$
are normalized across feature tokens for each component and determine how the feature representations are aggregated into latent components. The recombination weights
$
C_{r,t}^{(\ell)}
=
\operatorname{softmax}
(R_{r,t}^{(\ell)})
$
are normalized across components for each feature and determine how the processed components are combined to update each feature representation.

These two normalizations control the aggregation and recombination stages of the module. The aggregation weights first form the latent components 
$$U_{r,t}^{(\ell)} = (D_r^{(\ell)})^\top \;\overline H_{r,t}^{(\ell)}.$$ 
The $S_r$ components are partitioned among the $M_r$ subnetworks, with $P_r$ components assigned to each subnetwork. Let $\kappa_r(s)=\lceil s/P_r\rceil$ denote the subnetwork assigned to component $s$. Each component is then transformed as
\[
V_{r,s,t}^{(\ell)}
=
g_{r,\kappa_r(s)}^{(\ell)}
\left(U_{r,s,t}^{(\ell)}\right),
\qquad s=1,\ldots,S_r,
\]
and the resulting representations are stacked into $V_{r,t}^{(\ell)}\in\mathbb{R}^{S_r\times d_h}$. The recombination weights distribute these transformed components back to the feature tokens, producing the residual update
\begin{equation}
\widetilde H_{r,t}^{(\ell)}
=
H_{r,t}^{(\ell)}
+
C_{r,t}^{(\ell)}V_{r,t}^{(\ell)}.
\label{eq:mixture_update}
\end{equation}
Since both weights depend on the feature representations, each feature can receive a distinct combination of subnetwork transformations. The subnetworks are also shared across all features of type $r$, allowing type-specific specialization without introducing a separate subnetwork for every feature.

\subsection{Joint Attention}

The adaptive mixture module processes numerical and categorical tokens through separate sets of subnetworks. This separation supports type-specific transformations, but does not allow by itself information exchange across feature types. To enable joint contextualization, we concatenate the type-specific outputs $\widetilde H_t^{(\ell)} = \operatorname{Concat} ( \widetilde H_{\mathrm{num},t}^{(\ell)}, \widetilde H_{\mathrm{cat},t}^{(\ell)} )$ and apply multi-head self-attention \citep{vaswani2017attention}:
\begin{equation}
H_t^{(\ell+1)} = \widetilde H_t^{(\ell)} + \operatorname{MHA}^{(\ell)} \big( \operatorname{LN} \big( \widetilde H_t^{(\ell)} \big) \big).
\label{eq:global_attention}
\end{equation}
This operation allows each feature token to incorporate information from all numerical and categorical features. The resulting sequence is then partitioned into $H_{\mathrm{num},t}^{(\ell+1)}$ and $H_{\mathrm{cat},t}^{(\ell+1)}$ before entering the next layer. Each layer therefore combines type-specific adaptive processing with joint interaction.

\subsection{Endpoint Prediction}

After $L$ blocks, feature-specific output heads map the final numerical and categorical tokens to the parameters of their corresponding endpoint factors 
\begin{equation*}
\begin{aligned}
\mu_{\theta,j,t}(X_t)
&=
\left(w_{\mu,j}\right)^\top
h_{\mathrm{num},j,t}^{(L)}
+
b_{\mu,j}, \\
\pi_{\theta,k,t}(X_t)
&=
\operatorname{softmax}\left(
W_{\pi,k}h_{\mathrm{cat},k,t}^{(L)}
+
b_{\pi,k}
\right).
\end{aligned}
\label{eq:endpoint_heads}
\end{equation*}
These outputs parameterize the factorized endpoint distribution in Equation~\eqref{eq:endpoint_family} and are optimized using the endpoint objective in Equation~\eqref{eq:endpoint_training_objective}. 

Each head operates on a feature-specific representation containing both adaptive transformations and joint context. Concatenating the predicted numerical means and categorical probabilities yields $\widehat{m}_{\theta,t}(x)$, which defines the vector field
$
v_{\theta,t}(x)
=
(\widehat m_{\theta,t}(x)-x)/(1-t).
$

\section{Theoretical Analysis}
\label{sec:theory_analysis}

The mean-field assumption determines only how the endpoint distribution factorizes, leaving unspecified both the information available to each endpoint function and the representations shared among them. We first use an illustrative construction to examine the approximation error induced by restricting distinct endpoint functions to a single shared representation. We then show that endpoint excess risk controls the Wasserstein error of the generated distribution. All proofs and derivations are provided in the Appendix.

\paragraph{Conditioning Penalty.}
Fix $T\in(0,1)$. Let $t$ be independent of $(X_0,X_1)$ with density $\rho$ on $[0,T]$, and let $\mathcal F=\sigma(t,X_t)$ denote the full conditioning $\sigma$-algebra, where $X_t$ follows the conditional path in Equation~\eqref{eq:linear_conditional_path}. 
Let $\mathcal G$ be a restricted conditioning $\sigma$-algebra satisfying
$
\sigma(t)\subseteq\mathcal G\subseteq\mathcal F.
$
For any $\sigma$-algebra $\mathcal A$ satisfying $\sigma(t)\subseteq\mathcal A\subseteq\mathcal F$, define the numerical posterior mean $m_{\mathcal A} = \mathbb E[X_1^{\mathrm{num}}\mid\mathcal A]$ and the categorical posterior probability vector $p_{k,\mathcal A}$ for $C_1^{(k)}$. Let $\mathcal R_T^*(\mathcal A)$ denote the minimum endpoint risk over all $\mathcal A$-measurable numerical predictors and categorical probability vectors.

\begin{proposition}
\label{prop:conditioning_penalty}
Suppose $\mathbb E\|X_1^{\mathrm{num}}\|_2^2<\infty$ with $0<\nu_t<\infty$ almost surely. Then
\begin{align*}
\mathcal R_T^*(\mathcal G)-\mathcal R_T^*(\mathcal F)
={}&
\mathbb E\!\left[
\frac{\|m_{\mathcal F}-m_{\mathcal G}\|_2^2}{2\nu_t}
\right] \\
&+
\sum_{k=1}^{d_{\mathrm{cat}}}
\mathbb E\!\left[
\operatorname{KL}
\bigl(p_{k,\mathcal F}\,\|\,p_{k,\mathcal G}\bigr)
\right]
\ge 0.
\end{align*}
Equality holds if and only if $m_{\mathcal F}=m_{\mathcal G}$ and $p_{k,\mathcal F}=p_{k,\mathcal G}$ for every $k$, almost surely.
\end{proposition}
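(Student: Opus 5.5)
The plan is to decompose the endpoint risk into a numerical and a categorical part and minimize each pointwise in the conditioning. For an $\mathcal A$-measurable numerical predictor $\mu$ and probability vectors $\pi_k$, the risk is
\[
\mathcal R_T(\mu,\pi;\mathcal A)=\mathbb E\Bigl[\tfrac{1}{2\nu_t}\|X_1^{\mathrm{num}}-\mu\|_2^2\Bigr]-\sum_{k}\mathbb E\bigl[\log(\pi_k)_{C_1^{(k)}}\bigr].
\]
The two parts decouple, and the categorical part decouples over $k$, so each piece can be minimized separately.

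\textbf{Numerical part.} Since $\nu_t$ is $\sigma(t)\subseteq\mathcal A$-measurable, I would condition on $\mathcal A$ and use the orthogonality of $X_1^{\mathrm{num}}-m_{\mathcal A}$ to $\mathcal A$-measurable functions. This gives
\[
\mathbb E\Bigl[\tfrac{\|X_1^{\mathrm{num}}-\mu\|^2}{2\nu_t}\Bigr]=\mathbb E\Bigl[\tfrac{\|X_1^{\mathrm{num}}-m_{\mathcal A}\|^2}{2\nu_t}\Bigr]+\mathbb E\Bigl[\tfrac{\|m_{\mathcal A}-\mu\|^2}{2\nu_t}\Bigr],
\]
so the minimizer is $\mu=m_{\mathcal A}$. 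Applying this with $\mathcal A=\mathcal G$ and $\mathcal A=\mathcal F$ and subtracting, I then apply the same Pythagorean identity with $\mathcal A=\mathcal F$ and $\mu=m_{\mathcal G}$, which is legitimate because $m_{\mathcal G}$ is $\mathcal F$-measurable. The numerical gap is then exactly $\mathbb E[\|m_{\mathcal F}-m_{\mathcal G}\|^2/(2\nu_t)]$.

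\textbf{Categorical part.} Conditioning on $\mathcal A$, the conditional expected loss is $-\sum_c p_{k,\mathcal A,c}\log\pi_{k,c}=H(p_{k,\mathcal A})+\mathrm{KL}(p_{k,\mathcal A}\|\pi_k)$. By Gibbs' inequality this is minimized at $\pi_k=p_{k,\mathcal A}$, with minimal value $\mathbb E[H(p_{k,\mathcal A})]$. The gap is then obtained by evaluating the $\mathcal F$-loss at the $\mathcal F$-measurable choice $p_{k,\mathcal G}$. Concretely, the tower property gives $\mathbb E[-\log (p_{k,\mathcal G})_{C}] = \mathbb E[-\sum_c p_{k,\mathcal F,c}\log p_{k,\mathcal G,c}]$, which equals $\mathbb E[H(p_{k,\mathcal F})]+\mathbb E[\mathrm{KL}(p_{k,\mathcal F}\|p_{k,\mathcal G})]$. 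Summing the numerical and categorical gaps gives the identity, and nonnegativity is immediate. For equality, note that $\nu_t<\infty$ makes $1/(2\nu_t)>0$, and KL vanishes only at equal arguments, so zero gap forces both terms to vanish almost surely.

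\textbf{Main obstacle: integrability.} The risks must be finite, or the subtraction is meaningless. For the numerical part, $\mathbb E\|X_1^{\mathrm{num}}\|^2<\infty$ handles the squared error, but division by $\nu_t$ could make the terms infinite. I would handle this by working conditionally on $t$, or by noting that the gap identity holds in $[0,\infty]$ whenever $\mathcal R^*_T(\mathcal F)<\infty$. The categorical entropy is bounded by $\log K_k$, so that part is finite. In the KL term, $p_{k,\mathcal G}$ may have zero coordinates, but the set where $p_{k,\mathcal G,c}=0$ while $p_{k,\mathcal F,c}>0$ has probability zero by the tower property, so the term $-\log p_{k,\mathcal G,c}$ is handled almost surely. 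I would adopt the convention that the risk is finite, or else state the identity in the extended reals.
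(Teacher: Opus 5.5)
Your argument is correct and follows the paper's proof essentially step for step. Both proceed in the same order: first the Bayes predictors $m_{\mathcal A}$ and $p_{k,\mathcal A}$ from the conditional projection and cross-entropy decompositions, then the nested tower-property identities that produce the weighted squared gap and the expected KL, and finally positivity of $(2\nu_t)^{-1}$ and of KL for the equality case. The only real difference is the degenerate case $\mathcal R_T^*(\mathcal F)=\infty$, where the difference is $\infty-\infty$, which working in the extended reals does not resolve. There the paper defines the gap through the common truncation $a_M(t)=(2\nu_t)^{-1}\wedge M$: it proves the identity for each $M$ and passes to $\sup_M$ by monotone convergence. Your proof instead covers exactly the case $\mathcal R_T^*(\mathcal F)<\infty$, which is where the stated difference is literally defined.
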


The decomposition vanishes when the restricted context preserves both the numerical posterior mean and every categorical posterior probability. A penalty arises only when the omitted features contain residual endpoint information after conditioning on the retained features.

\paragraph{Architectural Bottlenecks.}
Let $S$ be a Rademacher variable that satisfies $\mathbb P(S=1)=\mathbb P(S=-1)=1/2$. Let $\xi\sim\mathcal N(0,\sigma^2)$ and $W,V\stackrel{\mathrm{i.i.d.}}{\sim}\mathcal N(0,1)$ be mutually independent and independent of $S$. Define $U=\delta S+\xi$ for $\delta\neq0$, and suppose $\nu_t\equiv\nu>0$.

\smallskip
\noindent\textit{$(i)$ Conditioning bottleneck. }
Suppose a numerical endpoint has conditional mean $m^{(1)}(U,S)=\alpha U+cS$ for $c\neq 0$. If its predictor is conditioned on $U$ but not on $S$, Proposition~\ref{prop:conditioning_penalty} implies the excess numerical endpoint risk
\begin{align*}
\frac{c^2}{2\nu}\,
\mathbb E\!\left[\operatorname{Var}(S\mid U)\right]
&=
\frac{c^2}{2\nu}\,
\mathbb E\bigg[
\operatorname{sech}^2
\bigg(\frac{\delta U}{\sigma^2}\bigg)
\bigg]
>0.
\end{align*}
The equality follows from $\mathbb E[S\mid U]=\tanh(\delta U/\sigma^2)$. The positive penalty shows that an informative numerical proxy for $S$ need not be sufficient for the endpoint mean when $\sigma^2>0$. Joint attention allows the categorical representation to enter the numerical feature update and therefore avoids imposing the restricted conditioning used in this example. 

\smallskip
\noindent\textit{$(ii)$ Representation bottleneck. }
Now consider two additional endpoint means 
\[
m^{(2)}(W,V)=aW,
\qquad
m^{(3)}(W,V)=bV,
\]
where $a,b\neq 0$. Suppose both endpoint functions are restricted to a
single shared scalar representation
\begin{align*}
\mathcal H_{\mathrm{shared}}
=
\left\{
(\theta_2 h,\theta_3 h):
\|h\|_{L^2}=1,\;
\theta_2,\theta_3\in\mathbb R
\right\}.
\end{align*}
The orthogonality of $W$ with $V$ implies
\begin{align*}
&\inf_{(f_2,f_3)\in\mathcal H_{\mathrm{shared}}}
\mathbb E\Big[
\{m^{(2)}-f_2\}^2+\{m^{(3)}-f_3\}^2
\Big] \\
&\qquad=
\min\{a^2,b^2\}
>0.
\end{align*}
By contrast, the shared component dictionary represents both endpoint means exactly when the features utilize divergent recombination weights, demonstrated in Appendix Section~A. This construction shows that feature-dependent recombination can remove the approximation error induced by the specified single-representation restriction while retaining a shared set of transformations.

The two examples concern distinct architectural restrictions. Joint attention allows cross-type information, whereas adaptive mixture processing allows different features to form distinct combinations of shared transformations. Their relevance depends respectively on residual cross-type endpoint information and the adequacy of a shared representation.

\paragraph{Distributional Error Bound.}
Let $f^*$ be the unrestricted Bayes predictor and define the excess risk $\mathcal E_T(f)=\mathcal R_T(f)-\mathcal R_T(f^*)$. The following theorem bounds the Wasserstein error of the generated distribution in terms of $\mathcal E_T(f)$.

\begin{theorem}
\label{thm:wasserstein_control}
Fix $T \in (0,1)$ and let $t$ have density $\rho$ on $[0,T]$. Assume $0 <\underline{\rho} < \rho(t)$, $0<\underline{\nu}\leq\nu_t\leq\overline{\nu}<\infty$, and $\mathbb E\|X_1^{\mathrm{num}}\|_2^2<\infty$. Suppose $v_{\theta,t}$ is jointly measurable in $(t,x)$, uniformly $L$-Lipschitz in $x$ over $t$, and satisfies
\[
\int_0^T\|v_{\theta,t}(0)\|_2\,dt<\infty.
\]
Then its time-$T$ distribution $\widehat p_T^\theta$ satisfies
\begin{align*}
W_2(\widehat p_T^\theta,p_1)
&\leq
\frac{\sqrt{K_{\rho,\nu}\Gamma_L(T)}}{1-T}
\sqrt{\mathcal E_T(\theta)} \\
&\quad+
(1-T)
\bigl(\mathbb E\|X_0-X_1\|_2^2\bigr)^{1/2},
\end{align*}
where
$K_{\rho,\nu}=2\max\{\overline{\nu},1\}/\underline{\rho}$ and $\Gamma_L(T)=(e^{2LT}-1)/(2L)$, with $\Gamma_0(T)=T$.
\end{theorem}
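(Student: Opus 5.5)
We split the error at the intermediate time $T$ and compare two flows started from the same Gaussian $p_0$. The first is the learned ODE $\dot{\widehat X}_t=v_{\theta,t}(\widehat X_t)$, which has a unique global solution because $v_{\theta,t}$ is uniformly $L$-Lipschitz, jointly measurable, and satisfies $\int_0^T\|v_{\theta,t}(0)\|_2\,dt<\infty$ (Carathéodory theory). The second is the true marginal velocity $u_t$. Its flow has time-$t$ marginal equal to the law $p_t$ of $X_t$, the standard flow-matching marginal-preservation fact. By the triangle inequality,
\[
W_2(\widehat p_T^\theta,p_1)\le W_2(\widehat p_T^\theta,p_T)+W_2(p_T,p_1).
\]
The second term is handled by the coupling $(X_T,X_1)$ of the linear path. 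Since $X_T-X_1=(1-T)(X_0-X_1)$, it is at most $(1-T)(\mathbb E\|X_0-X_1\|_2^2)^{1/2}$. This gives the bias term directly.

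For the first term we avoid any regularity assumption on $u_t$ and use only the Lipschitz property of the learned field. Let $Y_t$ be a process with $Y_0=X_0$ and marginal $p_t$ at each time. Let $Z_t$ solve the learned ODE with $Z_0=Y_0$. Define $\Delta_t=\mathbb E\|Z_t-Y_t\|_2^2$. Differentiating gives
\[
\tfrac{d}{dt}\Delta_t=2\mathbb E\langle Z_t-Y_t,\,v_{\theta,t}(Z_t)-u_t(Y_t)\rangle .
\]
We split $v_{\theta,t}(Z_t)-u_t(Y_t)$ as $[v_{\theta,t}(Z_t)-v_{\theta,t}(Y_t)]+[v_{\theta,t}(Y_t)-u_t(Y_t)]$. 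The first bracket is controlled by the Lipschitz constant, and we apply Young's inequality to the second. This yields
\[
\Delta_t'\le (2L+1)\Delta_t+\mathbb E\|v_{\theta,t}-u_t\|^2_{L^2(p_t)}.
\]
Grönwall then bounds $\Delta_T$. To obtain exactly the constant $\Gamma_L(T)=(e^{2LT}-1)/(2L)$, we will instead differentiate $\sqrt{\Delta_t}$. This gives
\[
\tfrac{d}{dt}\sqrt{\Delta_t}\le L\sqrt{\Delta_t}+\|v_{\theta,t}-u_t\|_{L^2(p_t)},
\]
hence
\[
\sqrt{\Delta_T}\le\int_0^T e^{L(T-s)}\|v_{\theta,s}-u_s\|\,ds .
\]
Cauchy–Schwarz then gives
\[
\sqrt{\Delta_T}\le\sqrt{\Gamma_L(T)}\,\Bigl(\int_0^T\|v_{\theta,s}-u_s\|^2_{L^2(p_s)}ds\Bigr)^{1/2}.
\]
The main technical obstacle is the existence of a process $Y_t$ carrying the marginals $p_t$ along which this differential inequality can be run. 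The cleanest route avoids $Y$ entirely: we let $\Phi_t$ be the learned flow map and estimate $\frac{d}{dt}W_2(\Phi_{t\to T}\#p_t,\cdot)$ via the continuity equation $\partial_tp_t+\nabla\!\cdot(p_tu_t)=0$. Equivalently, we push $p_s$ forward by the learned flow from time $s$ to $T$ and use the Lipschitz stability $e^{L(T-s)}$ of that flow. Then $W_2(\widehat p_T^\theta,p_T)\le\int_0^T e^{L(T-s)}\|v_{\theta,s}-u_s\|_{L^2(p_s)}ds$, the Alekseev–Gröbner-type estimate. This is where the integrability assumptions on $v_{\theta,t}$ and $X_1$ are used, to justify the interchange of derivative and expectation.

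Finally we convert the velocity error to excess risk. On $[0,T]$ we have
\[
\|v_{\theta,t}-u_t\|^2=\frac{\|\widehat m_{\theta,t}-m_t\|^2}{(1-t)^2}\le\frac{\|\widehat m_{\theta,t}-m_t\|^2}{(1-T)^2}.
\]
We integrate against $dt\le\rho(t)\,dt/\underline\rho$, so the time integral is bounded by $\underline\rho^{-1}\mathbb E_\rho[\cdot]/(1-T)^2$.

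For the numerical block, the computation behind Proposition~\ref{prop:conditioning_penalty} (applied with $\mathcal G$ replaced by the predictor) shows that the excess risk equals $\mathbb E\|\mu_\theta-m_{\mathcal F}\|^2/(2\nu_t)$. Hence $\|\mu_\theta-m\|^2\le 2\overline\nu\,\mathcal E^{\mathrm{num}}$.

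For each categorical block, the excess risk equals $\mathbb E\,\mathrm{KL}(p_{k,\mathcal F}\|\pi_{\theta,k})$. Pinsker's inequality together with $\|\cdot\|_2\le\|\cdot\|_1$ gives $\|\pi-p\|_2^2\le 2\,\mathrm{KL}$.

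Summing the two blocks gives $\mathbb E\|\widehat m-m\|^2\le 2\max\{\overline\nu,1\}\mathcal E_T(\theta)$. Combined with the previous bounds, this produces $K_{\rho,\nu}\Gamma_L(T)/(1-T)^2$ inside the square root, which is the claimed bound.
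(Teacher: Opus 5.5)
\textbf{Verdict: there is a genuine gap.} Your overall architecture matches the paper's, and all of the following are correct:
\begin{itemize}
\item the triangle inequality at time $T$;
\item the coupling $(X_T,X_1)$ for the truncation term;
\item a Gr\"onwall bound $W_2(\widehat p_T^\theta,p_T)\le\int_0^T e^{L(T-s)}\|v_{\theta,s}-u_s\|_{L^2(p_s)}\,ds$, followed by Cauchy--Schwarz to produce $\Gamma_L(T)$;
\item the conversion $\int_0^T\rho(t)\|\widehat m_{\theta,t}-m_t\|_{L^2(p_t)}^2\,dt\le 2\max\{\overline\nu,1\}\,\mathcal E_T(\theta)$ via the projection identity and Pinsker.
\end{itemize}
The constants come out exactly as stated.

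\textbf{Where the first route fails.} You never establish the stability estimate itself, and it is the only delicate step. In your first route, $Y$ must satisfy $\dot Y_t=u_t(Y_t)$, not merely $Y_t\sim p_t$. Otherwise your formula for $\frac{d}{dt}\Delta_t$ does not contain $u_t(Y_t)$. The only process you actually have with marginals $p_t$ is the interpolant, whose velocity is $X_1-X_0$. With $Y=X$, the cross term $\mathbb E\langle Z_t-X_t,\,u_t(X_t)-(X_1-X_0)\rangle$ does not vanish, because $Z_t=\Phi_t(X_0)$ is not $\sigma(X_t)$-measurable. Your bound would then contain $\mathbb E\|X_1-X_0-u_t(X_t)\|_2^2$, which does not go to zero with $\mathcal E_T(\theta)$. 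Appealing to ``the flow of $u_t$'' and the ``standard marginal-preservation fact'' assumes that $\dot y=u_t(y)$ is well posed and that the continuity equation has unique solutions. That is exactly the regularity you said you wanted to avoid.

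\textbf{Why the fallback does not help.} Your pushforward route only relocates the problem. The difficulty is not interchanging derivative and expectation. Discretizing gives
$W_2\bigl((\Phi_{s\to T})_\#p_s,(\Phi_{s+h\to T})_\#p_{s+h}\bigr)\le e^{L(T-s-h)}W_2\bigl((\Phi_{s\to s+h})_\#p_s,p_{s+h}\bigr)$.
The local term again needs a coupling of $p_s$ and $p_{s+h}$ that moves mass along $u_s$. The interpolation coupling only gives $h\|v_{\theta,s}(X_s)-(X_1-X_0)\|_{L^2}$, which is nonzero even when $v_{\theta,s}=u_s$.

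\textbf{How the paper closes it.} The paper proceeds in three steps:
\begin{itemize}
\item It builds a jointly Borel version $u_t=(m_t-x)/(1-t)$ from the Gaussian-convolution formula for $m_t$.
\item It checks that $(p_t,u_t)$ solves the continuity equation with $\int_0^T\|u_t\|_{L^2(p_t)}^2\,dt\le T\,\mathbb E\|X_1-X_0\|_2^2$.
\item It applies the superposition principle \citep[Theorem~8.2.1]{ambrosio2008gradient} to obtain $\Gamma_t\sim p_t$ with $\dot\Gamma_t=u_t(\Gamma_t)$.
\end{itemize}
That $\Gamma_t$ is precisely your $Y$, and your Gr\"onwall computation then goes through.

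\textbf{How to repair your own route.} Alternatively, you can make the pushforward argument rigorous. The curve $\mu_s=(\Phi_{s\to T})_\#p_s$ solves a continuity equation with velocity $w_s=[D\Phi_{s\to T}(u_s-v_{\theta,s})]\circ\Phi_{s\to T}^{-1}$. This needs Rademacher, which applies $p_s$-a.e.\ because $p_s$ is absolutely continuous for $s<1$, plus some care with the time derivative. It satisfies $\|w_s\|_{L^2(\mu_s)}\le e^{L(T-s)}\|u_s-v_{\theta,s}\|_{L^2(p_s)}$. Theorem~8.3.1 of the same reference then bounds the $W_2$ metric derivative of $\mu_s$ by $\|w_s\|_{L^2(\mu_s)}$. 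Either way, a result of this type must be invoked explicitly; it does not follow from the integrability assumptions alone.
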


The first term converts endpoint excess risk into a Wasserstein error bound, while the second captures truncation at $T<1$. Hence, larger $T$ reduces truncation error but increases the risk-dependent coefficient.

\begin{table*}[t]
\centering
\small
\setlength{\tabcolsep}{1.5mm}
\begin{tabular}{lrrrrrrrrr}
\toprule
\multicolumn{10}{l}{\textbf{Shape ($\uparrow$)}} \\
Method & Adult & Default & Beijing & Shoppers & Magic & News & Diabetes & Fault & Avg. rank ($\downarrow$) \\
\cmidrule(lr){1-10}
CTGAN & 0.821(.001) & 0.865(.000) & 0.802(.001) & 0.777(.001) & 0.903(.001) & 0.808(.000) & 0.695(.005) & 0.732(.002) & 6.88 \\
TVAE & 0.852(.001) & 0.897(.001) & 0.771(.001) & 0.759(.001) & 0.921(.002) & 0.830(.001) & 0.812(.006) & 0.857(.002) & 6.12 \\
CoDi & 0.777(.001) & 0.784(.001) & 0.801(.001) & 0.702(.001) & 0.903(.003) & 0.687(.001) & 0.800(.014) & 0.777(.040) & 7.25 \\
TabDDPM & 0.990(.001) & 0.987(.001) & 0.977(.001) & 0.971(.001) & 0.992(.001) & 0.187(.000) & 0.930(.088) & 0.302(.208) & 5.12 \\
TabSyn & 0.979(.001) & 0.954(.001) & 0.967(.001) & 0.983(.001) & 0.989(.001) & \underline{0.986}(.000) & 0.947(.006) & 0.959(.003) & 4.12 \\
TabbyFlow & \textbf{0.993}(.001) & \underline{0.989}(.001) & \underline{0.991}(.001) & 0.986(.001) & 0.991(.001) & 0.975(.000) & \underline{0.952}(.004) & \underline{0.970}(.002) & \underline{2.50} \\
TabDiff & 0.992(.000) & 0.987(.001) & 0.989(.000) & \underline{0.987}(.002) & \textbf{0.992}(.001) & 0.975(.000) & 0.952(.005) & 0.938(.004) & 2.75 \\
\rowcolor[gray]{0.94}
FUSE & \underline{0.993}(.001) & \textbf{0.992}(.001) & \textbf{0.992}(.001) & \textbf{0.989}(.001) & \underline{0.992}(.001) & \textbf{0.988}(.001) & \textbf{0.957}(.004) & \textbf{0.972}(.002) & \textbf{1.25} \\
\midrule
\multicolumn{10}{l}{\textbf{Trend ($\uparrow$)}} \\
Method & Adult & Default & Beijing & Shoppers & Magic & News & Diabetes & Fault & Avg. rank ($\downarrow$) \\
\cmidrule(lr){1-10}
CTGAN & 0.832(.016) & 0.839(.001) & 0.889(.003) & 0.845(.002) & 0.888(.005) & 0.884(.001) & 0.650(.053) & 0.784(.004) & 7.25 \\
TVAE & 0.857(.012) & 0.931(.001) & 0.841(.003) & 0.900(.002) & 0.930(.010) & 0.932(.002) & 0.695(.027) & 0.892(.004) & 6.25 \\
CoDi & 0.810(.001) & 0.640(.088) & 0.916(.006) & 0.914(.001) & 0.941(.004) & 0.863(.002) & 0.822(.039) & 0.920(.009) & 6.38 \\
TabDDPM & 0.981(.001) & 0.948(.004) & \textbf{0.977}(.001) & 0.956(.004) & 0.987(.004) & 0.602(.008) & 0.897(.147) & 0.420(.213) & 5.00 \\
TabSyn & 0.954(.001) & 0.920(.014) & 0.955(.003) & 0.982(.002) & \underline{0.992}(.002) & \underline{0.983}(.002) & 0.928(.010) & 0.969(.004) & 3.75 \\
TabbyFlow & \underline{0.987}(.001) & \underline{0.988}(.001) & \underline{0.972}(.004) & \underline{0.983}(.002) & \textbf{0.993}(.001) & \textbf{0.984}(.002) & \underline{0.936}(.009) & \underline{0.978}(.002) & \textbf{1.75} \\
TabDiff & 0.984(.001) & 0.986(.001) & 0.967(.003) & 0.976(.002) & 0.991(.001) & 0.941(.002) & 0.929(.008) & 0.818(.011) & 3.62 \\
\rowcolor[gray]{0.94}
FUSE & \textbf{0.988}(.001) & \textbf{0.989}(.001) & 0.966(.006) & \textbf{0.988}(.002) & 0.989(.005) & 0.983(.003) & \textbf{0.966}(.005) & \textbf{0.985}(.001) & \underline{2.00} \\
\midrule
\multicolumn{10}{l}{\textbf{C2ST ($\uparrow$)}} \\
Method & Adult & Default & Beijing & Shoppers & Magic & News & Diabetes & Fault & Avg. rank ($\downarrow$) \\
\cmidrule(lr){1-10}
CTGAN & 0.568(.004) & 0.723(.003) & 0.803(.003) & 0.644(.009) & 0.578(.006) & 0.588(.009) & 0.143(.007) & 0.175(.010) & 6.50 \\
TVAE & 0.553(.003) & 0.561(.003) & 0.659(.011) & 0.301(.011) & 0.729(.004) & 0.466(.005) & 0.439(.018) & 0.642(.015) & 6.62 \\
CoDi & 0.411(.003) & 0.268(.002) & 0.632(.002) & 0.410(.003) & 0.737(.004) & 0.081(.001) & 0.559(.031) & 0.471(.107) & 7.00 \\
TabDDPM & 0.970(.004) & 0.978(.005) & 0.970(.003) & 0.886(.006) & 0.993(.006) & 0.000(.000) & 0.952(.179) & 0.008(.009) & 5.12 \\
TabSyn & 0.912(.006) & 0.847(.004) & 0.883(.003) & 0.962(.009) & 0.991(.007) & \underline{0.970}(.004) & 0.980(.023) & 0.851(.058) & 4.12 \\
TabbyFlow & \textbf{0.992}(.005) & \underline{0.987}(.005) & \underline{0.991}(.005) & 0.971(.009) & \textbf{0.998}(.004) & 0.922(.004) & \textbf{0.996}(.008) & \textbf{0.996}(.007) & \underline{1.75} \\
TabDiff & 0.981(.005) & 0.964(.003) & 0.972(.004) & \textbf{0.982}(.008) & 0.995(.004) & 0.895(.004) & 0.983(.022) & 0.635(.027) & 3.25 \\
\rowcolor[gray]{0.94}
FUSE & \underline{0.991}(.005) & \textbf{0.989}(.010) & \textbf{0.996}(.004) & \underline{0.979}(.011) & \underline{0.997}(.004) & \textbf{0.973}(.008) & \underline{0.993}(.010) & \underline{0.994}(.007) & \textbf{1.62} \\
\midrule
\multicolumn{10}{l}{\textbf{MLE}} \\
Method & Adult & Default & Beijing & Shoppers & Magic & News & Diabetes & Fault & Avg. rank ($\downarrow$) \\
& AUC ($\uparrow$) & AUC ($\uparrow$) & RMSE ($\downarrow$) & AUC ($\uparrow$) & AUC ($\uparrow$) & RMSE ($\downarrow$) & AUC ($\uparrow$) & AUC ($\uparrow$) & \\
\cmidrule(lr){1-10}
CTGAN & 0.889(.002) & 0.744(.007) & 0.865(.017) & 0.851(.009) & 0.871(.003) & 0.894(.013) & 0.489(.105) & 0.772(.034) & 6.88 \\
TVAE & 0.878(.002) & 0.747(.005) & 0.873(.019) & 0.872(.006) & 0.890(.003) & 0.995(.015) & 0.808(.026) & 0.880(.008) & 6.62 \\
CoDi & 0.744(.018) & 0.483(.024) & 0.770(.029) & 0.819(.015) & 0.926(.002) & 1.771(.161) & \underline{0.820}(.024) & 0.948(.010) & 6.00 \\
TabDDPM & 0.907(.001) & \textbf{0.770}(.002) & 0.608(.018) & 0.916(.005) & 0.931(.003) & 2.510(.890) & 0.816(.070) & 0.511(.030) & 4.50 \\
TabSyn & 0.908(.001) & 0.757(.009) & 0.703(.020) & 0.909(.005) & \textbf{0.932}(.003) & 0.883(.019) & \textbf{0.823}(.034) & 0.940(.012) & 3.50 \\
TabbyFlow & \underline{0.910}(.003) & 0.764(.007) & \textbf{0.541}(.013) & 0.917(.006) & \underline{0.931}(.003) & \underline{0.872}(.009) & 0.810(.030) & \underline{0.960}(.007) & \underline{2.75} \\
TabDiff & \textbf{0.912}(.002) & 0.768(.007) & \underline{0.573}(.013) & \underline{0.919}(.004) & 0.930(.002) & 0.888(.026) & 0.818(.037) & 0.929(.006) & 3.12 \\
\rowcolor[gray]{0.94}
FUSE & 0.909(.001) & \underline{0.769}(.004) & 0.610(.004) & \textbf{0.921}(.003) & 0.930(.003) & \textbf{0.841}(.003) & 0.810(.023) & \textbf{0.962}(.006) & \textbf{2.62} \\
\bottomrule
\end{tabular}
\caption{Performance comparison across eight datasets. Results are reported as means with standard deviations in parentheses. Bold and underlined values indicate the best and second-best results, respectively, and the proposed method is shaded.}
\label{tab:baseline-detailed}
\end{table*}

\section{Experiments}
\label{sec:experiments}

We evaluate the proposed architecture by comparing it against various baseline models across multiple datasets, assessing both distributional fidelity and downstream machine learning utility. We complement this evaluation with qualitative analyses that visually assess the preservation of marginal distributions and pairwise feature correlations. Finally, we conduct ablation studies to examine the respective contributions of joint attention and adaptive mixture processing.

\subsection{Experimental Setup}
\label{sec:experimental_setup}

\paragraph{Datasets.}
We evaluate FUSE on eight tabular datasets: Adult, Default, Beijing, Shoppers, Magic, News, Diabetes, and Fault. Each dataset contains both numerical and categorical features and is associated with either binary classification, multiclass classification, or regression. Dataset sizes range from 691 to 37,581 samples. Detailed dataset profiles are provided in the Appendix.

\paragraph{Baselines.}
We compare FUSE against seven competitive synthetic tabular data generation methods spanning four model families: 1) GAN-based method: CTGAN \citep{xu2019ctgan}; 2) VAE-based method: TVAE \citep{xu2019ctgan}; 3) Diffusion-based methods: CoDi \citep{lee2023codi}, TabDDPM \citep{kotelnikov2023tabddpm}, TabSyn \citep{zhang2024tabsyn}, and TabDiff \citep{shi2025tabdiff}; and 4) Flow-based methods: TabbyFlow \citep{guzman2025efvfm}.

\paragraph{Metrics.}
We evaluate generative fidelity and downstream utility using diverse metrics. Shape quantifies marginal distributional similarity using Kolmogorov-Smirnov statistics for numerical features and total variation distances for categorical features. Trend measures the preservation of pairwise feature dependencies by comparing the correlation structures between real and synthetic data. We also report a Classifier Two-sample Test (C2ST) score, for which a higher value indicates that a discriminator is less able to distinguish real from synthetic samples. To assess downstream utility, Machine Learning Efficiency (MLE) trains a predictive model on synthetic data and evaluates its performance on held-out real data. This section reports Shape, Trend, C2ST, and MLE, while results for $\alpha$-Precision and $\beta$-Recall are provided in the Appendix.

\paragraph{Implementation Details.}
Each main result is averaged over 20 random initialization seeds. For C2ST, we use an XGBoost classifier to discriminate real from synthetic samples. Complete implementation details, including preprocessing procedures and hyperparameter settings, are provided in the Appendix.

\begin{figure}[t]
    \centering
    \includegraphics[width=1.0\linewidth]{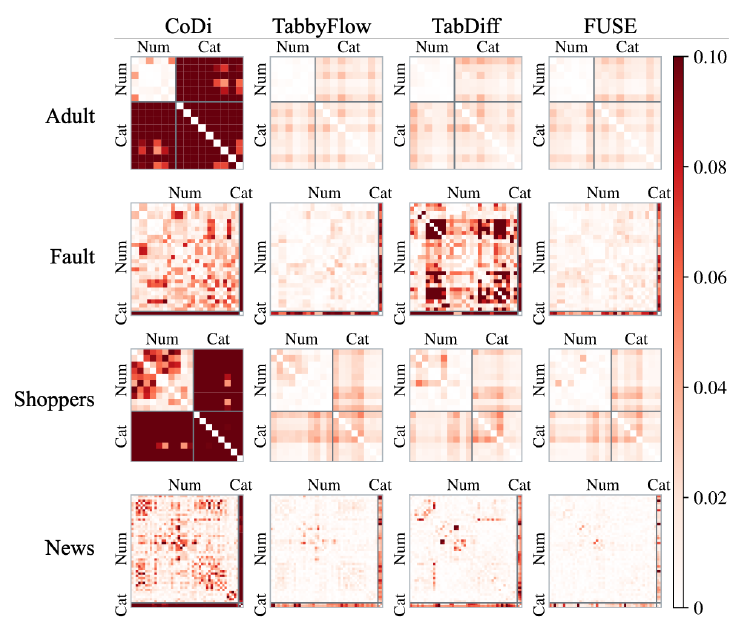}
    \caption{Absolute differences between pairwise correlations from real and synthetic data. Rows correspond to datasets and columns to methods. Values closer to zero indicate more faithful preservation of feature correlation.}
    \label{fig_dependence_heatmap}
\end{figure}

\subsection{Main Results}
\label{sec:baseline_comparison}

Table~\ref{tab:baseline-detailed} compares FUSE with seven tabular data generation baselines in terms of fidelity, pairwise dependence, and downstream utility.

\paragraph{Shape and Trend.}
FUSE demonstrates the most consistent marginal fidelity among the evaluated methods. It ranks first on six datasets and second on the remaining two in Shape, resulting in an average rank of $1.25$. The next-best method, TabbyFlow, obtains an average rank of $2.50$. The improvement is therefore observed across all eight datasets rather than being driven by a small subset of favorable cases.

For Trend, FUSE achieves the highest score on five datasets and the highest cross-dataset mean of $0.982$, compared with $0.978$ for TabbyFlow. The largest improvements over TabbyFlow occur on Diabetes and Fault, where Trend increases from $0.936$ to $0.966$ and from $0.978$ to $0.985$, respectively. TabbyFlow nevertheless obtains a slightly better average rank of $1.75$, compared with $2.00$ for FUSE, reflecting its stronger relative rankings on Beijing, Magic, and News. These results show that FUSE provides a clear improvement in marginal fidelity while remaining highly competitive in preserving pairwise feature dependencies.

\paragraph{C2ST.}
FUSE achieves the best C2ST average rank of $1.62$, followed by TabbyFlow at $1.75$. It ranks first on Default, Beijing, and News and second on each of the remaining five datasets, making it the only method that places among the top two across all eight datasets. Since the C2ST classifier processes all features jointly, this consistency cannot be attributed solely to accurate univariate marginals. Rather, it indicates that FUSE leaves fewer multivariate discrepancies detectable by the selected XGBoost classifier.

\paragraph{Machine Learning Efficiency.}
FUSE also achieves the best average MLE rank of $2.62$, outperforming TabbyFlow at $2.75$ and TabDiff at $3.12$. It obtains the best downstream performance on Shoppers, News, and Fault and the second-best result on Default. The improvement is particularly achieved on News, where FUSE reduces RMSE from $0.872$ for TabbyFlow, the strongest competing method on this dataset, to $0.841$. This corresponds to a relative reduction of approximately $3.6\%$. The gains are not uniform across all tasks, as TabbyFlow achieves a lower RMSE on Beijing and TabSyn obtains a higher AUC on Diabetes. Nevertheless, the best average rank shows that the improvements in distributional fidelity are accompanied by competitive downstream utility.

Overall, FUSE achieves average ranks of $1.25$, $2.00$, $1.62$, and $2.62$ for Shape, Trend, C2ST, and MLE, respectively. The results show a favorable balance across marginal, pairwise, multivariate, and task-oriented evaluations, without relying on uniformly dominant performance under any single metric.

\begin{table*}[t]
\centering
\small
\renewcommand{\arraystretch}{1.08}
\setlength{\textwidth}{1.0mm}

\begin{tabular}{lcccrrrrrrr}
\toprule
& \multicolumn{3}{c}{Architecture}
& \multicolumn{5}{c}{Fidelity}
& \multicolumn{2}{c}{MLE} \\
\cmidrule(lr){2-4}
\cmidrule(lr){5-9}
\cmidrule(lr){10-11}
Variant
& \shortstack{Numerical\\mixture}
& \shortstack{Categorical\\mixture}
& \shortstack{Joint\\attention}
& \shortstack{Shape\\$(\uparrow)$}
& \shortstack{Trend\\$(\uparrow)$}
& \shortstack{C2ST\\$(\uparrow)$}
& \shortstack{$\alpha$-Prec.\\$(\uparrow)$}
& \shortstack{$\beta$-Rec.\\$(\uparrow)$}
& \shortstack{AUC\\$(\uparrow)$}
& \shortstack{RMSE\\$(\downarrow)$} \\
\midrule
Dense FFN
& -- & -- & --
& 0.984 & 0.960 & 0.980 & 0.970 & 0.344
& 0.630 & 0.820 \\

Restricted attention
& $\checkmark$ & $\checkmark$ & --
& 0.984 & 0.960 & \underline{0.990} & 0.972 & 0.371
& 0.644 & 0.818 \\

Shared processing
& -- & -- & $\checkmark$
& \underline{0.984} & \underline{0.976} & 0.985
& \underline{0.985} & 0.482
& \textbf{0.888} & 0.742 \\

Numerical mixture
& $\checkmark$ & -- & $\checkmark$
& 0.984 & 0.973 & 0.983 & 0.985 & \textbf{0.548}
& \underline{0.884} & 0.734 \\

Categorical mixture
& -- & $\checkmark$ & $\checkmark$
& 0.983 & 0.970 & 0.973 & 0.982 & 0.507
& 0.882 & \underline{0.725} \\

\rowcolor[gray]{0.94}
FUSE
& $\checkmark$ & $\checkmark$ & $\checkmark$
& \textbf{0.985} & \textbf{0.977} & \textbf{0.991}
& \textbf{0.986} & \underline{0.545}
& 0.882 & \textbf{0.725} \\
\bottomrule
\end{tabular}
\label{tab:component-analysis}
\caption{Component analysis averaged across eight datasets. Check marks indicate active components. MLE reports classification AUC and regression RMSE. Bold and underlined values denote the best and second-best results based on unrounded means.}
\label{tab:component-analysis}
\end{table*}

\begin{figure}[t]
    \centering
    \includegraphics[width=1.0\linewidth]{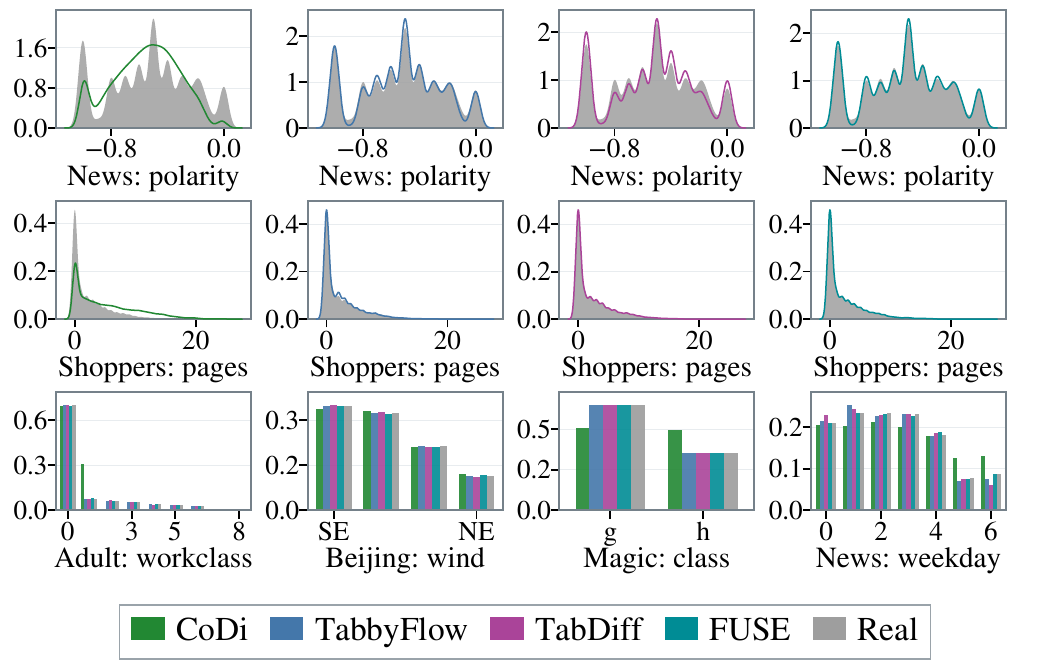}
    \caption{Real and synthetic marginal distributions for numerical and categorical features.}
    \label{fig:marginal_comparison}
\end{figure}

\subsection{Synthetic Data Visualization}
\label{sec_synthetic_visualization}

Figure~\ref{fig:marginal_comparison} compares marginal distributions generated by CoDi, TabbyFlow, TabDiff, and FUSE, while Figure~\ref{fig_dependence_heatmap} visualizes errors in pairwise dependence preservation. 

For numerical features, kernel density estimates show that FUSE recovers both the global shape and local structure of the empirical distributions. Specifically, FUSE captures the sequence of local modes in the polarity variable on News and Shoppers. For categorical variables, FUSE closely matches the empirical category frequencies across Adult, Beijing, Magic, and News. It preserves both dominant and low-frequency classes in Adult and accurately reproduces the target balance in Magic. FUSE also generates a tighter representation of less frequent weekday categories. These results demonstrate that FUSE effectively preserves both dominant categorical structures and minority probability masses.

Figure~\ref{fig_dependence_heatmap} reports the absolute difference between the pairwise correlation matrices of the real and synthetic data. The block-structured matrices reveal how well each method preserves cross-type dependencies. FUSE achieves consistently low correlation errors across all blocks, with a pronounced advantage on News and Fault. Specifically on News, FUSE yields near-zero errors, whereas CoDi exhibits widespread discrepancies and TabDiff retains localized high errors. Ultimately, these visualizations confirm that FUSE successfully captures both the intra-type correlations and the complex cross-type dependencies inherent to mixed-type tabular data.

\subsection{Component Analysis}
\label{sec:component_analysis}

Table~\ref{tab:component-analysis} evaluates six configurations to separate the contributions of adaptive mixture processing and joint attention. All configurations use identical training objectives, data splits, and five matched random seeds. Dense processing replaces the adaptive mixture modules with feed-forward networks (FFN), while restricted attention limits interactions to features of the same type. The shared-processing configuration combines dense networks with joint attention and serves as the reference for evaluating adaptive mixture processing under unrestricted conditioning.

The effect of joint attention is examined through two comparisons. Under dense processing, enabling joint attention preserves Shape at $0.984$ while increasing Trend from $0.960$ to $0.976$ and AUC from $0.630$ to $0.888$. It also reduces RMSE from $0.820$ to $0.742$. A similar pattern appears when both adaptive mixture modules are active. Relative to restricted attention, FUSE improves Trend from $0.960$ to $0.977$, AUC from $0.644$ to $0.882$, and RMSE from $0.818$ to $0.725$, while maintaining comparable marginal fidelity. This result shows that cross-type information exchange is particularly important for dependency preservation and downstream utility. It is consistent with Proposition~\ref{prop:conditioning_penalty}, which characterizes the cost of withholding cross-feature information.

The contribution of adaptive mixture processing is evaluated under joint attention. Compared with shared processing, FUSE improves Shape from $0.984$ to $0.985$, Trend from $0.976$ to $0.977$, C2ST from $0.985$ to $0.991$, and $\beta$-Recall from $0.482$ to $0.545$. It also reduces RMSE from $0.742$ to $0.725$, although AUC decreases slightly from $0.888$ to $0.882$. Consequently, FUSE achieves the best result on four of the five fidelity metrics and the second-best $\beta$-Recall. Applying the mixture module to a single feature type produces metric-specific trade-offs. The numerical-only configuration attains the highest $\beta$-Recall and second-highest AUC, whereas the categorical-only configuration matches the RMSE of FUSE but performs less favorably on the fidelity metrics.

Overall, the results support complementary roles for the two components. Joint attention enables effective cross-type conditioning, while adaptive mixture processing further improves fidelity and coverage through type-specific computation. Their combination provides the strongest aggregate fidelity profile, although no configuration uniformly dominates every metric.
\section{Conclusion}
\label{sec:conclusion}

In this work, we have introduced FUSE, a network architecture for mixed-type variational flow matching that makes the parameter sharing configuration between mixed-type endpoint factors explicit. FUSE combines type-specific adaptive mixture processing with joint attention. The adaptive mixture modules form feature-dependent combinations of shared specialized subnetworks, while joint attention preserves cross-type information exchange and unrestricted conditioning for every endpoint predictor. Our theoretical analysis characterizes the complementary roles of these components. The conditioning information bound quantifies the excess population risk caused by restricted conditioning contexts, while the architectural bottleneck analysis distinguishes information loss from limitations in shared computation. The distributional error bound further relates endpoint-prediction risk to Wasserstein generation error under the stated assumptions. Comprehensive experiments show that FUSE achieves strong aggregate performance across the evaluated metrics. The component analysis provides evidence for joint attention and shows that adaptive mixture processing improves several fidelity and utility measures. 

\newpage
\bibliography{references}

@inproceedings{lipman2023flow,
  author    = {Lipman, Yaron and Chen, Ricky T. Q. and Ben-Hamu, Heli and Nickel, Maximilian and Le, Matt},
  title     = {Flow Matching for Generative Modeling},
  booktitle = {The Eleventh International Conference on Learning Representations},
  year      = {2023},
  url       = {https://openreview.net/forum?id=PqvMRDCJT9t}
}

@article{albergo2023stochastic,
  author  = {Michael Albergo and Nicholas M. Boffi and Eric Vanden-Eijnden},
  title   = {Stochastic Interpolants: A Unifying Framework for Flows and Diffusions},
  journal = {Journal of Machine Learning Research},
  year    = {2025},
  volume  = {26},
  number  = {209},
  pages   = {1--80},
  url     = {http://jmlr.org/papers/v26/23-1605.html}
}

@inproceedings{eijkelboom2024vfm,
  series = {NeurIPS 2024},
  title = {Variational Flow Matching for Graph Generation},
  url = {http://dx.doi.org/10.52202/079017-0374},
  DOI = {10.52202/079017-0374},
  booktitle = {Advances in Neural Information Processing Systems 37},
  publisher = {Neural Information Processing Systems Foundation,  Inc. (NeurIPS)},
  author = {Eijkelboom,  Floor and Bartosh,  Grigory and Naesseth,  Christian and Welling,  Max and Meent,  Jan-Willem Van De},
  year = {2024},
  pages = {11735--11764},
  collection = {NeurIPS 2024}
}

@inproceedings{guzman2025efvfm,
  title     = {Exponential Family Variational Flow Matching for Tabular Data Generation},
  author    = {Guzm{\'a}n-Cordero, Andr{\'e}s and Eijkelboom, Floor and Van De Meent, Jan-Willem},
  booktitle = {Proceedings of the 42nd International Conference on Machine Learning},
  pages     = {21516--21529},
  year      = {2025},
  volume    = {267},
  series    = {Proceedings of Machine Learning Research},
  publisher = {PMLR},
  url       = {https://proceedings.mlr.press/v267/guzman-cordero25a.html}
}

@inproceedings{xu2019ctgan,
 author = {Xu, Lei and Skoularidou, Maria and Cuesta-Infante, Alfredo and Veeramachaneni, Kalyan},
 booktitle = {Advances in Neural Information Processing Systems},
 publisher = {Curran Associates, Inc.},
 title = {Modeling Tabular data using Conditional GAN},
 url = {https://proceedings.neurips.cc/paper_files/paper/2019/file/254ed7d2de3b23ab10936522dd547b78-Paper.pdf},
 volume = {32},
 year = {2019}
}

@inproceedings{zhao2021ctabgan,
    title =  {CTAB-GAN: Effective Table Data Synthesizing},  
    author = {Zhao, Zilong and Kunar, Aditya and Birke, Robert and Chen, Lydia Y.},  
    booktitle =  {Proceedings of The 13th Asian Conference on Machine Learning},  
    pages =  {97--112},  
    year =  {2021},  
    volume =  {157},  
    series =  {Proceedings of Machine Learning Research},  
    month =  {17--19 Nov},  
    publisher =  {PMLR},  
    url =  {https://proceedings.mlr.press/v157/zhao21a.html}
}

@inproceedings{qian2023synthcity,
  series = {NeurIPS 2023},
  title = {Synthcity: a benchmark framework for diverse use cases of tabular synthetic data},
  url = {http://dx.doi.org/10.52202/075280-0140},
  doi = {10.52202/075280-0140},
  booktitle = {Advances in Neural Information Processing Systems 36},
  publisher = {Neural Information Processing Systems Foundation, Inc. (NeurIPS)},
  author = {Qian, Zhaozhi and Davis, Rob and {van der Schaar}, Mihaela},
  year = {2023},
  pages = {3173--3188},
  collection = {NeurIPS 2023}
}

@inproceedings{vanbreugel2023testing,
  series = {NeurIPS 2023},
  title = {Can You Rely on Your Model Evaluation? Improving Model Evaluation with Synthetic Test Data},
  url = {http://dx.doi.org/10.52202/075280-0091},
  doi = {10.52202/075280-0091},
  booktitle = {Advances in Neural Information Processing Systems 36},
  publisher = {Neural Information Processing Systems Foundation, Inc. (NeurIPS)},
  author = {{van Breugel}, Boris and Seedat, Nabeel and Imrie, Fergus and {van der Schaar}, Mihaela},
  year = {2023},
  pages = {1889--1904},
  collection = {NeurIPS 2023}
}

@inproceedings{kotelnikov2023tabddpm,
  title = {{TabDDPM}: Modelling Tabular Data with Diffusion Models},
  author =  {Kotelnikov, Akim and Baranchuk, Dmitry and Rubachev, Ivan and Babenko, Artem},
  booktitle = {Proceedings of the 40th International Conference on Machine Learning},
  pages = {17564--17579},
  year = {2023},
  volume = {202},
  series = {Proceedings of Machine Learning Research},
  month = {23--29 Jul},
  publisher = {PMLR},
  url = 	 {https://proceedings.mlr.press/v202/kotelnikov23a.html}
}

@inproceedings{lee2023codi,
  title = {{CoDi}: Co-evolving Contrastive Diffusion Models for Mixed-type Tabular Synthesis},
  author = {Lee, Chaejeong and Kim, Jayoung and Park, Noseong},
  booktitle = {Proceedings of the 40th International Conference on Machine Learning},
  pages = {18940--18956},
  year = {2023},
  volume = {202},
  series = {Proceedings of Machine Learning Research},
  month = {23--29 Jul},
  publisher =  {PMLR},
  url = {https://proceedings.mlr.press/v202/lee23i.html}
}

@inproceedings{zhang2024tabsyn,
 author = {Zhang, Hengrui and Zhang, Jiani and Shen, Zhengyuan and Srinivasan, Balasubramaniam and Qin, Xiao and Faloutsos, Christos and Rangwala, Huzefa and Karypis, George},
 booktitle = {International Conference on Learning Representations},
 pages = {52829--52857},
 title = {Mixed-Type Tabular Data Synthesis with Score-based Diffusion in Latent Space},
 url = {https://proceedings.iclr.cc/paper_files/paper/2024/file/e9750610639c3e7a849cff746bf60dbd-Paper-Conference.pdf},
 volume = {2024},
 year = {2024}
}

@inproceedings{mueller2025cdtd,
 author = {Mueller, Markus and Gruber, Kathrin and Fok, Dennis},
 booktitle = {International Conference on Learning Representations},
 pages = {39186--39225},
 title = {Continuous Diffusion for Mixed-Type Tabular Data},
 url = {https://proceedings.iclr.cc/paper_files/paper/2025/file/61f4e5747b1b753cb35546b15d981f76-Paper-Conference.pdf},
 volume = {2025},
 year = {2025}
}

@inproceedings{shi2025tabdiff,
 author = {Shi, Juntong and Xu, Minkai and Hua, Harper and Zhang, Hengrui and Ermon, Stefano and Leskovec, Jure},
 booktitle = {International Conference on Learning Representations},
 pages = {37353--37375},
 title = {{TabDiff}: a Mixed-type Diffusion Model for Tabular Data Generation},
 url = {https://proceedings.iclr.cc/paper_files/paper/2025/file/5c882988ce5fac487974ee4f415b96a9-Paper-Conference.pdf},
 volume = {2025},
 year = {2025}
}

@inproceedings{zhang2025tabnat,
  title = {{TabNAT}: A Continuous-Discrete Joint Generative Framework for Tabular Data},
  author = {Zhang, Hengrui and Fang, Liancheng and Wu, Qitian and Yu, Philip S.},
  booktitle = {Proceedings of the 42nd International Conference on Machine Learning},
  pages =  {74858--74881},
  year =  {2025},
  volume = {267},
  series = {Proceedings of Machine Learning Research},
  month = {13--19 Jul},
  publisher = {PMLR},
  url = {https://proceedings.mlr.press/v267/zhang25t.html}
}

@article{si2026tabrep,
    title={{TabRep}: Training Tabular Diffusion Models with a Simple and Effective Continuous Representation},
    author={Jacob Si and Zijing Ou and Mike Qu and Zhengrui Xiang and Yingzhen Li},
    journal={Transactions on Machine Learning Research},
    issn={2835-8856},
    year={2026},
    url={https://openreview.net/forum?id=yRbtFEh2OP}
}

@inproceedings{jolicoeurmartineau2024forest,
  title = {Generating and Imputing Tabular Data via Diffusion and Flow-based Gradient-Boosted Trees},
  author = {Jolicoeur-Martineau, Alexia and Fatras, Kilian and Kachman, Tal},
  booktitle = {Proceedings of The 27th International Conference on Artificial Intelligence and Statistics},
  pages = {1288--1296},
  year = {2024},
  volume = {238},
  series = {Proceedings of Machine Learning Research},
  month = {02--04 May},
  publisher = {PMLR},
  url = {https://proceedings.mlr.press/v238/jolicoeur-martineau24a.html}
}

@inproceedings{puigcerver2024softmoe,
     author = {Puigcerver, Joan and Riquelme Ruiz, Carlos and Mustafa, Basil and Houlsby, Neil},
     booktitle = {International Conference on Learning Representations},
     pages = {28435--28445},
     title = {From Sparse to Soft Mixtures of Experts},
     url = {https://proceedings.iclr.cc/paper_files/paper/2024/file/79fea214543ba263952ac3f4e5452b14-Paper-Conference.pdf},
     volume = {2024},
     year = {2024}
}

@book{ambrosio2008gradient,
  author    = {Ambrosio, Luigi and Gigli, Nicola and Savar{\'e}, Giuseppe},
  title     = {Gradient Flows: In Metric Spaces and in the Space of Probability Measures},
  series    = {Lectures in Mathematics ETH Z{\"u}rich},
  edition   = {2},
  publisher = {Birkh{\"a}user Basel},
  year      = {2008},
  isbn      = {978-3-7643-8722-8},
  doi       = {10.1007/978-3-7643-8722-8},
  url       = {https://doi.org/10.1007/978-3-7643-8722-8}
}

@inproceedings{bao2022vlmo,
  series = {NeurIPS 2022},
  title = {{VLMo}: Unified Vision-Language Pre-Training with Mixture-Of-Modality-Experts},
  url = {http://dx.doi.org/10.52202/068431-2384},
  DOI = {10.52202/068431-2384},
  booktitle = {Advances in Neural Information Processing Systems 35},
  publisher = {Neural Information Processing Systems Foundation,  Inc. (NeurIPS)},
  author = {Bao,  Hangbo and Wang,  Wenhui and Dong,  Li and Liu,  Qiang and Mohammed,  Owais Khan and Aggarwal,  Kriti and Som,  Subhojit and Piao,  Songhao and Wei,  Furu},
  year = {2022},
  pages = {32897–-32912},
  collection = {NeurIPS 2022}
}

@inproceedings{esser2024scaling,
  title = {Scaling Rectified Flow Transformers for High-Resolution Image Synthesis},
  author = {Esser, Patrick and Kulal, Sumith and Blattmann, Andreas and Entezari, Rahim and M\"{u}ller, Jonas and Saini, Harry and Levi, Yam and Lorenz, Dominik and Sauer, Axel and Boesel, Frederic and Podell, Dustin and Dockhorn, Tim and English, Zion and Rombach, Robin},
  booktitle = {Proceedings of the 41st International Conference on Machine Learning},
  pages = {12606--12633},
  year = {2024},
  volume = {235},
  series = {Proceedings of Machine Learning Research},
  month = {21--27 Jul},
  publisher = {PMLR},
  url = {https://proceedings.mlr.press/v235/esser24a.html}
}

@article{nasution2026flow,
    title={Flow Matching for Tabular Data Synthesis},
    author={Bahrul Ilmi Nasution and Floor Eijkelboom and Mark Elliot and Richard Allmendinger and Christian A. Naesseth},
    journal={Transactions on Machine Learning Research},
    issn={2835-8856},
    year={2026},
    url={https://openreview.net/forum?id=RdOjoAa66L}
}

@inproceedings{mueller2026cascaded,
    title={Cascaded Flow Matching for Heterogeneous Tabular Data with Mixed-Type Features},
    author={Markus Mueller and Kathrin Gruber and Dennis Fok},
    booktitle={Forty-third International Conference on Machine Learning},
    year={2026},
    url={https://openreview.net/forum?id=l2ywV9sV0L}
}

@article{choi2026tabgeoflow,
  title = {{TabGeoFlow}: A Geometric Flow Matching Model for Tabular Data Synthesis},
  volume = {40},
  ISSN = {2159-5399},
  url = {http://dx.doi.org/10.1609/aaai.v40i25.39192},
  DOI = {10.1609/aaai.v40i25.39192},
  number = {25},
  journal = {Proceedings of the AAAI Conference on Artificial Intelligence},
  publisher = {Association for the Advancement of Artificial Intelligence (AAAI)},
  author = {Choi,  Jong In},
  year = {2026},
  month = Mar,
  pages = {20562-–20569}
}

@inproceedings{holzmuller2024better,
  series = {NeurIPS 2024},
  title = {Better by default: Strong pre-tuned {MLP}s and boosted trees on tabular data},
  url = {http://dx.doi.org/10.52202/079017-0837},
  DOI = {10.52202/079017-0837},
  booktitle = {Advances in Neural Information Processing Systems 37},
  publisher = {Neural Information Processing Systems Foundation,  Inc. (NeurIPS)},
  author = {Holzm\"{u}ller,  David and Grinsztajn,  Léo and Steinwart,  Ingo},
  year = {2024},
  pages = {26577--26658},
  collection = {NeurIPS 2024}
}

@inproceedings{gardner2023benchmarking,
  series = {NeurIPS 2023},
  title = {Benchmarking Distribution Shift in Tabular Data with TableShift},
  url = {http://dx.doi.org/10.52202/075280-2324},
  DOI = {10.52202/075280-2324},
  booktitle = {Advances in Neural Information Processing Systems 36},
  publisher = {Neural Information Processing Systems Foundation,  Inc. (NeurIPS)},
  author = {Gardner,  Josh and Popovic,  Zoran and Schmidt,  Ludwig},
  year = {2023},
  pages = {53385--53432},
  collection = {NeurIPS 2023}
}

@inproceedings{vaswani2017attention,
     author = {Vaswani, Ashish and Shazeer, Noam and Parmar, Niki and Uszkoreit, Jakob and Jones, Llion and Gomez, Aidan N and Kaiser, \L ukasz and Polosukhin, Illia},
     booktitle = {Advances in Neural Information Processing Systems},
     publisher = {Curran Associates, Inc.},
     title = {Attention is All you Need},
     url = {https://proceedings.neurips.cc/paper_files/paper/2017/file/3f5ee243547dee91fbd053c1c4a845aa-Paper.pdf},
     volume = {30},
     year = {2017}
}

@inproceedings{puigcerver2024soft,
     author = {Puigcerver, Joan and Riquelme Ruiz, Carlos and Mustafa, Basil and Houlsby, Neil},
     booktitle = {International Conference on Learning Representations},
     pages = {28435--28445},
     title = {From Sparse to Soft Mixtures of Experts},
     url = {https://proceedings.iclr.cc/paper_files/paper/2024/file/79fea214543ba263952ac3f4e5452b14-Paper-Conference.pdf},
     volume = {2024},
     year = {2024}
}

@article{little2025synthetic,
  title={Synthetic census microdata generation: A comparative study of synthesis methods examining the trade-off between disclosure risk and utility},
  author={Little, Claire and Allmendinger, Richard and Elliot, Mark},
  journal={Journal of Official Statistics},
  volume={41},
  number={1},
  pages={255--308},
  year={2025},
  publisher={SAGE Publications Sage UK: London, England}
}

@inproceedings{
albergo2023building,
title={Building Normalizing Flows with Stochastic Interpolants},
author={Michael Samuel Albergo and Eric Vanden-Eijnden},
booktitle={The Eleventh International Conference on Learning Representations },
year={2023}
}

@inproceedings{
liu2023flow,
title={Flow Straight and Fast: Learning to Generate and Transfer Data with Rectified Flow},
author={Xingchao Liu and Chengyue Gong and Qiang Liu},
booktitle={The Eleventh International Conference on Learning Representations },
year={2023}
}

@InProceedings{alaa2022faithful,
  title = 	 {How Faithful is your Synthetic Data? {S}ample-level Metrics for Evaluating and Auditing Generative Models},
  author =       {Alaa, Ahmed and Van Breugel, Boris and Saveliev, Evgeny S. and van der Schaar, Mihaela},
  booktitle = 	 {Proceedings of the 39th International Conference on Machine Learning},
  pages = 	 {290--306},
  year = 	 {2022}
}

@inproceedings{shazeer2017outrageously,
  author = {Noam Shazeer and
            Azalia Mirhoseini and
            Krzysztof Maziarz and
            Andy Davis and
            Quoc V. Le and
            Geoffrey E. Hinton and
            Jeff Dean},
  title = {Outrageously Large Neural Networks: The Sparsely-Gated
           Mixture-of-Experts Layer},
  booktitle = {5th International Conference on Learning Representations,
               ICLR 2017, Toulon, France, April 24--26, 2017,
               Conference Track Proceedings},
  publisher = {OpenReview.net},
  year = {2017},
  url = {https://openreview.net/forum?id=B1ckMDqlg}
}

\section*{Appendix}

\appendix
\section{Proofs}
\label{app:proofs}

This section follows the order of the theoretical analysis in the main paper. We first prove Proposition~1, then derive the two architectural bottleneck examples, and finally prove Theorem~1. We conclude by verifying that a fixed FUSE network satisfies the regularity conditions used in Theorem~1.

\subsection{Proof of Proposition~1}

We use the population version of the endpoint objective in Equation~(3). For
$f=(\mu,\{\pi_k\}_{k=1}^{d_{\mathrm{cat}}})$, write
\begin{align*}
\mathcal R_T(f)
={}&
\mathbb E\!\left[
\frac{\|X_1^{\mathrm{num}}-\mu_t(X_t)\|_2^2}{2\nu_t}
\right] \\
&-
\sum_{k=1}^{d_{\mathrm{cat}}}
\mathbb E\!\left[
\log \pi_{k,t}(X_t)_{C_1^{(k)}}
\right],
\end{align*}
where the expectation includes $t$, whose density on $[0,T]$ is $\rho$. Each $\pi_{k,t}(X_t)$ is required to lie in the probability simplex $\Delta_{K_k}$. For a conditioning $\sigma$-algebra $\mathcal A$, let $\mathcal R_T^*(\mathcal A)$ be the infimum of $\mathcal R_T(f)$ over all $\mathcal A$-measurable predictors.

The assumptions of Proposition~1 do not require $\nu_t$ to be bounded away from zero. Consequently, both Bayes risks can be infinite, in which case the expression $+\infty-(+\infty)$ is not defined. We therefore use a common truncation only to define the difference in this exceptional case. Set $a(t)=(2\nu_t)^{-1}$ and $a_M(t)=a(t)\wedge M$, and let $\mathcal R_{T,M}^*(\mathcal A)$ denote the Bayes risk obtained by replacing $a$ with $a_M$ in the numerical term. We interpret the risk gap in Proposition~1 as
\begin{align}
\Delta_T(\mathcal G,\mathcal F)
:={}&
\sup_{M>0}
\left\{
\mathcal R_{T,M}^*(\mathcal G)
-
\mathcal R_{T,M}^*(\mathcal F)
\right\}.
\label{eq:supp-common-truncation-gap}
\end{align}
The proof below shows that the quantity inside the supremum is nondecreasing
in $M$. Whenever both untruncated Bayes risks are finite,
$\Delta_T(\mathcal G,\mathcal F)$ agrees with the ordinary difference
$\mathcal R_T^*(\mathcal G)-\mathcal R_T^*(\mathcal F)$.

\begin{proof}[Proof of Proposition~1]
For $\mathcal A\in\{\mathcal G,\mathcal F\}$, define
\begin{align*}
m_{\mathcal A}
&=
\mathbb E[X_1^{\mathrm{num}}\mid\mathcal A],
&
 p_{k,\mathcal A}(r)
&=
\mathbb P(C_1^{(k)}=r\mid\mathcal A).
\end{align*}
Because $\mathbb E\|X_1^{\mathrm{num}}\|_2^2<\infty$, conditional Jensen's
inequality gives $m_{\mathcal A}\in L^2$. For any finite
$\mathcal A$-measurable numerical predictor $h$, the conditional projection
identity gives, in $[0,\infty]$,
\begin{align*}
\mathbb E\!\left[
\|X_1^{\mathrm{num}}-h\|_2^2
\mid\mathcal A
\right]
={}&
\mathbb E\left[
\|X_1^{\mathrm{num}}-m_{\mathcal A}\|_2^2
\mid\mathcal A
\right] \\
&+
\|m_{\mathcal A}-h\|_2^2.
\end{align*}
The identity for a predictor without an a priori square-integrability
assumption follows by truncating $h$ and applying monotone convergence.
Since $a_M(t)$ is $\sigma(t)$-measurable and
$\sigma(t)\subseteq\mathcal A$, multiplication by $a_M(t)$ and integration
show that $m_{\mathcal A}$ minimizes the truncated numerical risk.

For the categorical term, let
$H(p)=-\sum_r p(r)\log p(r)$. For any $\mathcal A$-measurable probability
vector $q_k$,
\begin{align*}
\mathbb E\left[
-\log q_k(C_1^{(k)})
\mid\mathcal A
\right]
={}&
H(p_{k,\mathcal A})
+
\operatorname{KL}
\left(
 p_{k,\mathcal A}
 \middle\|
 q_k
\right).
\end{align*}
Hence the Bayes categorical predictor is $p_{k,\mathcal A}$. Combining the
numerical and categorical terms yields
\begin{align}
\mathcal R_{T,M}^*(\mathcal A)
={}&
\mathbb E\!\left[
 a_M(t)
 \|X_1^{\mathrm{num}}-m_{\mathcal A}\|_2^2
\right] 
+
\sum_{k=1}^{d_{\mathrm{cat}}}
\mathbb E\!\left[
 H(p_{k,\mathcal A})
\right].
\label{eq:supp-truncated-bayes-risk}
\end{align}

Because $\mathcal G\subseteq\mathcal F$, the tower property gives
$m_{\mathcal G}=\mathbb E[m_{\mathcal F}\mid\mathcal G]$ and
$p_{k,\mathcal G}=\mathbb E[p_{k,\mathcal F}\mid\mathcal G]$ coordinatewise.
Applying the nested projection identity to the numerical term in
Equation~\eqref{eq:supp-truncated-bayes-risk} gives
\begin{align*}
&
\mathbb E\!\left[
 a_M(t)
 \left\{
 \|X_1^{\mathrm{num}}-m_{\mathcal G}\|_2^2
 -
 \|X_1^{\mathrm{num}}-m_{\mathcal F}\|_2^2
 \right\}
\right]
\notag\\
&\qquad=
\mathbb E\!\left[
 a_M(t)
 \|m_{\mathcal F}-m_{\mathcal G}\|_2^2
\right].
\end{align*}
For each categorical feature, the cross-entropy decomposition gives
\begin{align*}
\mathbb E\left[
 H(p_{k,\mathcal G})
-
 H(p_{k,\mathcal F})
\right]=
\mathbb E\left[
\operatorname{KL}
\left(
 p_{k,\mathcal F}
 \middle\|
 p_{k,\mathcal G}
\right)
\right].
\end{align*}
Therefore, for every $M>0$,
\begin{align*}
&
\mathcal R_{T,M}^*(\mathcal G)
-
\mathcal R_{T,M}^*(\mathcal F)
\notag\\
&\qquad=
\mathbb E\!\left[
 a_M(t)
 \|m_{\mathcal F}-m_{\mathcal G}\|_2^2
\right]
+
\sum_{k=1}^{d_{\mathrm{cat}}}
\mathbb E\!\left[
\operatorname{KL}
\left(
 p_{k,\mathcal F}
 \middle\|
 p_{k,\mathcal G}
\right)
\right].
\end{align*}
The right-hand side is nondecreasing in $M$. Taking the supremum in
Equation~\eqref{eq:supp-common-truncation-gap} and applying monotone
convergence gives
\begin{align*}
\Delta_T(\mathcal G,\mathcal F)
={}&
\mathbb E\!\left[
\frac{
\|m_{\mathcal F}-m_{\mathcal G}\|_2^2
}{2\nu_t}
\right]
\notag\\
&+
\sum_{k=1}^{d_{\mathrm{cat}}}
\mathbb E\!\left[
\operatorname{KL}
\left(
 p_{k,\mathcal F}
 \middle\|
 p_{k,\mathcal G}
\right)
\right]
\geq 0.
\end{align*}
This is the identity stated in Proposition~1, with the common-truncation
interpretation used only when ordinary subtraction is undefined.

Finally, $a(t)>0$ almost surely, and every KL divergence is nonnegative.
Thus the gap is zero if and only if
\begin{align*}
m_{\mathcal F}=m_{\mathcal G}
\quad\text{and}\quad
p_{k,\mathcal F}=p_{k,\mathcal G}
\quad
\text{for every }k,
\end{align*}
almost surely. This proves the equality characterization.
\end{proof}

We briefly record when the common-truncation convention reduces to ordinary
subtraction. Since each categorical entropy is bounded by $\log K_k$, both
untruncated Bayes risks are finite whenever
\begin{align*}
\mathbb E\!\left[
\frac{\|X_1^{\mathrm{num}}\|_2^2}{\nu_t}
\right]
<\infty.
\end{align*}
Indeed, the zero numerical predictor and uniform categorical predictors then
have finite risk. For the variance schedule used in the experiments,
$\nu_t=1-t^2\geq0$ on $[0,1]$, so the assumed second moment of
$X_1^{\mathrm{num}}$ is sufficient. Hence the difference in Proposition~1 is
an ordinary finite subtraction for the reported experimental settings.

\subsection{Derivations for the Two Architectural Bottlenecks}

\paragraph{Conditioning bottleneck.}
Recall the construction in the main paper:
\begin{align*}
\mathbb P(S=1)
=
\mathbb P(S=-1)
=
\tfrac12,\;
U
=
\delta S+\xi,\;
\xi \sim \mathcal N(0,\sigma^2),
\end{align*}
where $\xi\perp S$, $\delta\neq0$, and $\sigma^2>0$. Bayes' rule gives
\begin{align*}
\log
\frac{\mathbb P(S=1\mid U=u)}
{\mathbb P(S=-1\mid U=u)}
&=
\frac{(u+\delta)^2-(u-\delta)^2}{2\sigma^2}
=
\frac{2\delta u}{\sigma^2}.
\end{align*}
Since $S\in\{-1,1\}$, it follows that
\begin{align*}
\mathbb E[S\mid U=u]
&=
\tanh\!\left(\frac{\delta u}{\sigma^2}\right),\\
\operatorname{Var}(S\mid U=u)
&=
\operatorname{sech}^2\!\left(\frac{\delta u}{\sigma^2}\right).
\end{align*}
For the numerical endpoint mean
$m^{(1)}(U,S)=\alpha U+cS$, the full and restricted posterior means are
\begin{align*}
m_{\mathcal F}^{(1)}
&=
\alpha U+cS,
&
m_{\mathcal G}^{(1)}
&=
\alpha U+c\mathbb E[S\mid U],
\end{align*}
where $\mathcal F=\sigma(U,S)$ and $\mathcal G=\sigma(U)$. Proposition~1
therefore gives
\begin{align*}
\mathbb E\!\left[
\frac{
\{m_{\mathcal F}^{(1)}-m_{\mathcal G}^{(1)}\}^2
}{2\nu}
\right]
&=
\frac{c^2}{2\nu}
\mathbb E\!\left[
\{S-\mathbb E[S\mid U]\}^2
\right]
\notag\\
&=
\frac{c^2}{2\nu}
\mathbb E\!\left[
\operatorname{Var}(S\mid U)
\right]
\notag\\
&=
\frac{c^2}{2\nu}
\mathbb E\!\left[
\operatorname{sech}^2\!\left(
\frac{\delta U}{\sigma^2}
\right)
\right]
>0.
\end{align*}
The strict inequality follows because $c\neq0$, $\nu>0$, and
$\operatorname{sech}^2(z)>0$ for every finite $z$.

\paragraph{Representation bottleneck.}
Let $\mathcal H=L^2(P_{W,V})$. Since $W$ and $V$ are independent standard
normal variables,
\begin{align*}
\langle W,W\rangle_{\mathcal H}
&=
\langle V,V\rangle_{\mathcal H}
=1,
&
\langle W,V\rangle_{\mathcal H}
&=0.
\end{align*}
Fix $h\in\mathcal H$ with $\|h\|_{\mathcal H}=1$. The least-squares
coefficients are
\begin{align*}
\theta_2^*
&=
\langle aW,h\rangle_{\mathcal H},
&
\theta_3^*
&=
\langle bV,h\rangle_{\mathcal H},
\end{align*}
and hence
\begin{align*}
&
\inf_{\theta_2,\theta_3\in\mathbb R}
\mathbb E\!\left[
(aW-\theta_2h)^2
+
(bV-\theta_3h)^2
\right]
\notag\\
&\qquad=
 a^2+b^2
-
 a^2\langle W,h\rangle_{\mathcal H}^2
-
 b^2\langle V,h\rangle_{\mathcal H}^2.
\end{align*}
By Bessel's inequality,
\begin{align*}
a^2\langle W,h\rangle_{\mathcal H}^2
+
b^2\langle V,h\rangle_{\mathcal H}^2
&\leq
\max\{a^2,b^2\}
\left(
\langle W,h\rangle_{\mathcal H}^2
+
\langle V,h\rangle_{\mathcal H}^2
\right)
\notag\\
&\leq
\max\{a^2,b^2\}.
\end{align*}
The bound is attained by $h=W$ when $a^2\geq b^2$ and by $h=V$ otherwise.
Therefore,
\begin{align*}
\inf_{(f_2,f_3)\in\mathcal H_{\mathrm{shared}}}
\mathbb E\!\left[
\{m^{(2)}-f_2\}^2
+
\{m^{(3)}-f_3\}^2 
\right]\\
=
\min\{a^2,b^2\}.
\end{align*}

Two shared latent components are sufficient once the features can use
different recombination weights. Define
\begin{align*}
g_1
&=
\frac{3aW-bV}{2},
&
g_2
&=
\frac{-aW+3bV}{2},
\notag\\
c_2
&=
(3/4,1/4),
&
c_3
&=
(1/4,3/4).
\end{align*}
Both $c_2$ and $c_3$ are probability vectors, and direct calculation gives
\begin{align*}
c_2^\top(g_1,g_2)
&=
 aW,
&
c_3^\top(g_1,g_2)
&=
 bV.
\end{align*}
Thus the same component dictionary represents both endpoint means exactly,
while feature-dependent recombination selects a different mixture for each
feature.

\subsection{Proof of Theorem~1}

For $t<T$, let $p_t=\mathcal L(X_t)$, where $X_t$ is the interpolation in
Equation~(1). Define the true and learned embedded endpoint means by
\begin{align*}
m_t(x)
&=
\mathbb E[X_1\mid X_t=x],
\notag\\
\widehat m_{\theta,t}(x)
&=
\bigl(
\mu_{\theta,t}(x),
\pi_{\theta,1,t}(x),
\ldots,
\pi_{\theta,d_{\mathrm{cat}},t}(x)
\bigr)
\in\mathbb R^D.
\end{align*}
Each categorical probability vector occupies its corresponding one-hot block.
As defined in Section~2, the endpoint predictor induces
\begin{align*}
v_{\theta,t}(x)
&=
\frac{\widehat m_{\theta,t}(x)-x}{1-t}.
\end{align*}
Let $f^*$ be the unrestricted Bayes predictor and use the shorthand
\begin{align*}
\mathcal E_T(\theta)
&=
\mathcal R_T(f_\theta)-\mathcal R_T(f^*),
\end{align*}
which is the excess risk appearing in Theorem~1.

\begin{proof}[Proof of Theorem~1]
Because $\nu_t\geq\underline\nu>0$, the zero numerical predictor and uniform
categorical predictors give
\begin{align*}
0
\leq
\mathcal R_T(f^*)
&\leq
\frac{
\mathbb E\|X_1^{\mathrm{num}}\|_2^2
}{2\underline\nu}
+
\sum_{k=1}^{d_{\mathrm{cat}}}\log K_k
<\infty.
\end{align*}
Thus $\mathcal E_T(\theta)$ is well defined in $[0,\infty]$. If it is
infinite, the claimed bound is immediate, so we assume
$\mathcal E_T(\theta)<\infty$ below. Moreover, each categorical one-hot block
has squared norm one, and therefore
\begin{align*}
\mathbb E\|X_1\|_2^2
&=
\mathbb E\|X_1^{\mathrm{num}}\|_2^2
+d_{\mathrm{cat}}
<\infty.
\end{align*}
Hence $p_0,p_1\in\mathcal P_2(\mathbb R^D)$.

For categorical feature $k$, let
\begin{align*}
p_{k,t}^*(x)
&=
\mathbb P(C_1^{(k)}=\cdot\mid X_t=x).
\end{align*}
The categorical blocks of $m_t(x)$ are precisely
$p_{k,t}^*(x)$. The conditional projection and cross-entropy identities used
in the proof of Proposition~1 yield
\begin{align*}
\mathcal E_T(\theta)
={}&
\frac12
\int_0^T
\frac{\rho(t)}{\nu_t}
\|\mu_{\theta,t}-m_t^{\mathrm{num}}\|_{L^2(p_t)}^2
\,dt
\notag\\
&+
\sum_{k=1}^{d_{\mathrm{cat}}}
\int_0^T
\rho(t)
\int
\operatorname{KL}
\left(
 p_{k,t}^*(x)
 \middle\|
 \pi_{\theta,k,t}(x)
\right)
 p_t(dx)
\,dt.
\end{align*}
Pinsker's inequality implies
\begin{align*}
\|p_{k,t}^*(x)-\pi_{\theta,k,t}(x)\|_2^2
&\leq
\|p_{k,t}^*(x)-\pi_{\theta,k,t}(x)\|_1^2
\notag\\
&\leq
2\operatorname{KL}
\left(
 p_{k,t}^*(x)
 \middle\|
 \pi_{\theta,k,t}(x)
\right).
\end{align*}
Set
\begin{align*}
b_t^2
&=
\|\widehat m_{\theta,t}-m_t\|_{L^2(p_t)}^2.
\end{align*}
Jointly measurable versions of the conditional means can be chosen below, so
$t\mapsto b_t^2$ is measurable. Combining the numerical projection identity
with Pinsker's inequality gives
\begin{align}
\int_0^T\rho(t)b_t^2\,dt
&\leq
2\max\{\overline\nu,1\}
\mathcal E_T(\theta).
\label{eq:supp-risk-mean-calibration}
\end{align}
This is the only point at which the numerical Gaussian loss and the
categorical cross-entropy loss enter the Wasserstein argument.

The assumed uniform $L$-Lipschitz property gives the linear-growth bound
\begin{align*}
\|v_{\theta,t}(x)\|_2
&\leq
\|v_{\theta,t}(0)\|_2+L\|x\|_2.
\end{align*}
Together with joint measurability and
$\int_0^T\|v_{\theta,t}(0)\|_2\,dt<\infty$, this gives a unique global
Carath\'eodory flow $\Phi_t$. Gronwall's inequality yields
\begin{align*}
\|\Phi_t(x)\|_2
&\leq
 e^{Lt}
\left(
\|x\|_2
+
\int_0^t\|v_{\theta,s}(0)\|_2\,ds
\right).
\end{align*}
Since $p_0\in\mathcal P_2(\mathbb R^D)$, the learned law
$\widehat p_t^\theta=(\Phi_t)_\#p_0$ also belongs to
$\mathcal P_2(\mathbb R^D)$ for every $t\leq T$.

We next construct an oracle velocity whose marginal law is $p_t$. Let
$\gamma_\sigma$ denote the density of $\mathcal N(0,\sigma^2I_D)$. Since
$T<1$, for $t\in[0,T]$ define
\begin{align}
d_t(x)
&=
\int
\gamma_{1-t}(x-ty)
\,p_1(dy),
\notag\\
n_t(x)
&=
\int
 y\gamma_{1-t}(x-ty)
\,p_1(dy),
\notag\\
m_t(x)
&=
\frac{n_t(x)}{d_t(x)},
\notag\\
u_t(x)
&=
\frac{m_t(x)-x}{1-t}
=
\mathbb E[X_1-X_0\mid X_t=x].
\label{eq:supp-oracle-velocity}
\end{align}
The denominator $d_t(x)$ is strictly positive. Because
$\mathbb E\|X_1\|_2<\infty$ and $1-t\geq1-T>0$, parameterized dominated
convergence shows that $(t,x)\mapsto(d_t(x),n_t(x))$ is Borel measurable.
Thus Equation~\eqref{eq:supp-oracle-velocity} provides a jointly Borel
version of both $m_t$ and $u_t$.

For every $\varphi\in C_c^1(\mathbb R^D)$, differentiating the explicit
interpolation in Equation~(1) gives, for almost every $t$,
\begin{align*}
\frac{d}{dt}
\int\varphi(x)\,p_t(dx)
&=
\mathbb E\!\left[
\nabla\varphi(X_t)^\top(X_1-X_0)
\right]
\notag\\
&=
\int
\nabla\varphi(x)^\top u_t(x)
\,p_t(dx).
\end{align*}
Moreover, conditional Jensen's inequality gives
\begin{align*}
\int_0^T
\|u_t\|_{L^2(p_t)}^2
\,dt
&\leq
T\mathbb E\|X_1-X_0\|_2^2
<\infty,
\end{align*}
and the interpolation coupling gives
\begin{align*}
W_2(p_s,p_t)
&\leq
|s-t|
\left(
\mathbb E\|X_1-X_0\|_2^2
\right)^{1/2}.
\end{align*}
Therefore $(p_t,u_t)$ satisfies the continuity equation with finite kinetic
energy. The superposition principle
\citep[Theorem~8.2.1]{ambrosio2008gradient} provides a process
$(\Gamma_t)_{t\in[0,T]}$ on a common probability space such that
\begin{align*}
\Gamma_t
&\sim
p_t,
&
\dot\Gamma_t
&=
 u_t(\Gamma_t)
\end{align*}

Initialize the learned flow with the same random variable $\Gamma_0$ and set
\begin{align*}
\widehat\Gamma_t
&=
\Phi_t(\Gamma_0),
&
D_t
&=
\|\widehat\Gamma_t-\Gamma_t\|_2,
\notag\\
e_t
&=
\|v_{\theta,t}(\Gamma_t)-u_t(\Gamma_t)\|_2.
\end{align*}
By the endpoint-induced form of $v_{\theta,t}$ and
Equation~\eqref{eq:supp-oracle-velocity},
\begin{align*}
\mathbb E[e_t^2]
&=
\frac{b_t^2}{(1-t)^2}.
\end{align*}
Equation~\eqref{eq:supp-risk-mean-calibration}, the lower bound on $\rho$,
and $1-t\geq1-T$ imply
\begin{align*}
\mathbb E\!\left[
\int_0^T e_t^2\,dt
\right]
&\leq
\frac{
2\max\{\overline\nu,1\}
}{
\underline\rho(1-T)^2
}
\mathcal E_T(\theta)
<\infty.
\end{align*}
Thus $e_\cdot\in L^1(0,T)$ almost surely. Using the $L$-Lipschitz property of
$v_{\theta,t}$, the chain rule gives, for almost every $t$,
\begin{align*}
D_t'
&\leq
LD_t+e_t,
\qquad
D_0=0.
\end{align*}
The same inequality holds at times for which $D_t=0$ by the standard
absolute-continuity argument. Gronwall's inequality therefore gives
\begin{align*}
D_T
&\leq
\int_0^T
 e^{L(T-t)}e_t
\,dt.
\end{align*}
Using the coupling $(\widehat\Gamma_T,\Gamma_T)$, Minkowski's inequality,
Cauchy--Schwarz, and
$\Gamma_L(T)=\int_0^T e^{2L(T-t)}\,dt$, we obtain
\begin{align}
W_2(\widehat p_T^\theta,p_T)
&\leq
\|D_T\|_{L^2}
\notag\\
&\leq
\int_0^T
\frac{e^{L(T-t)}}{1-t}
 b_t
\,dt
\notag\\
&\leq
\frac{
\sqrt{K_{\rho,\nu}\Gamma_L(T)}
}{1-T}
\sqrt{\mathcal E_T(\theta)},
\label{eq:supp-learned-oracle-bound}
\end{align}
where
$K_{\rho,\nu}=2\max\{\overline\nu,1\}/\underline\rho$, exactly as in
Theorem~1.

Finally, coupling $X_T=(1-T)X_0+TX_1$ with $X_1$ gives
\begin{align}
W_2(p_T,p_1)
&\leq
(1-T)
\left(
\mathbb E\|X_0-X_1\|_2^2
\right)^{1/2}.
\label{eq:supp-terminal-truncation}
\end{align}
The triangle inequality, together with
Equations~\eqref{eq:supp-learned-oracle-bound} and
\eqref{eq:supp-terminal-truncation}, proves Theorem~1.
\end{proof}

Theorem~1 concerns the continuous preprocessed Euclidean state in
$\mathbb R^D$, including the one-hot categorical blocks. It therefore bounds
the distribution transported by the ODE before categorical argmax and inverse
preprocessing.

\paragraph{Relation to uniform training on $[0,1]$.}
Theorem~1 allows a general time density $\rho$ on $[0,T]$. When training uses
$t\sim\operatorname{Unif}(0,1)$ as in Section~2, let $r_\theta(t)\geq0$ denote
the fixed-time excess endpoint risk. The uniform risk restricted to $[0,T]$
satisfies
\begin{align*}
\mathcal E_T^{\mathrm{unif}}(\theta)
&=
\frac1T\int_0^T r_\theta(t)\,dt
\leq
\frac1T\int_0^1 r_\theta(t)\,dt
=
\frac1T\mathcal E_1^{\mathrm{unif}}(\theta).
\end{align*}
This relation explains how the population objective trained on the full time
interval controls the truncated-time risk used in the theorem.

\subsection{Regularity of the FUSE Vector Field}

\begin{lemma}
\label{lem:regularity_FUSE}
Fix $T\in(0,1)$, and let $\theta$ be a finite parameter. Then there exist
constants $B_{\theta,T},K_{\theta,T}<\infty$ such that
\begin{align}
\sup_{0\leq t\leq T}
\|\widehat m_{\theta,t}(0)\|_2
&\leq
B_{\theta,T},
\notag\\
\|\widehat m_{\theta,t}(x)-\widehat m_{\theta,t}(y)\|_2
&\leq
K_{\theta,T}\|x-y\|_2
\quad 
\label{eq:supp-fuse-endpoint-regularity}
\end{align}
for all $t\in[0,T]$, $x,y\in\mathbb R^D$. 
\end{lemma}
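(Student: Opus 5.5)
The plan is to view $\widehat m_{\theta,t}(x)$ as a finite composition of maps and to show that each map is globally Lipschitz in the token state. The Lipschitz constants must be uniform over $t\in[0,T]$; the time embedding $\psi(t)$ is the only place $t$ enters. For the first bound, I will use continuity of $t\mapsto \widehat m_{\theta,t}(0)$ on the compact interval $[0,T]$. This holds provided $\psi$ is continuous, as for the standard sinusoidal or MLP embeddings. It follows because every layer is a continuous function of its input and of $\psi(t)$, so the supremum is finite and gives $B_{\theta,T}$. Alternatively, boundedness of $\psi$ on $[0,T]$ plus the same layerwise bounds gives $B_{\theta,T}$ directly.

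For the Lipschitz bound I proceed layer by layer. First, the input embedding $x\mapsto H^{(0)}_t$ is affine in $x$ with a fixed linear part plus a $t$-dependent offset. It is therefore Lipschitz with a constant independent of $t$.

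Second, each block consists of LayerNorm, the soft-mixture operator, and residual multi-head attention with LayerNorm. I will show each is globally Lipschitz on $\mathbb R^{n\times d_h}$. LayerNorm with the usual stabilizing $\epsilon>0$ is globally Lipschitz: its Jacobian is bounded by $\|\gamma\|_\infty/\sqrt{\epsilon}$ up to constants. Because of the normalization, its output is also bounded, with norm at most $\sqrt{d_h}\|\gamma\|_\infty+\|\beta\|$. Hence all downstream quantities that depend only on $\overline H$ live in a compact set. This includes the scores $R$, the weights $D,C$, the components $U=D^\top\overline H$, the outputs $V=g(U)$, and the attention queries, keys and values. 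On a compact set, smooth or Lipschitz maps (softmax, products, the subnetworks $g_{r,m}$ built from affine layers and Lipschitz activations) are Lipschitz. The composition $H\mapsto C(\overline H)V(\overline H)$ is therefore globally Lipschitz as a function of $H$, and the same holds for $H\mapsto \operatorname{MHA}(\operatorname{LN}(H))$. Adding the identity residual preserves the Lipschitz property, with constant $1+\mathrm{Lip}$.

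Third, the output heads are affine for $\mu$ and are softmax of an affine map for $\pi$. Softmax is $1$-Lipschitz in $\ell_2$, so the heads are Lipschitz too. Multiplying the finitely many constants over the $L$ blocks gives $K_{\theta,T}$, which is in fact independent of $t$.

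The main obstacle is that attention and the soft-mixture products are not globally Lipschitz on unbounded inputs. The products $D^\top\overline H$ and $CV$, as well as the quadratic score–value interactions, grow without bound. The key step is therefore to exploit that every nonlinear interaction acts only on LayerNorm outputs, which are uniformly bounded. Its Lipschitz constant then reduces to a bound on a Jacobian over a compact set, which exists by continuity of the derivatives. I would spell this out carefully for the product $C(\overline H)V(\overline H)$ using $\|AB-A'B'\|\le\|A-A'\|\|B\|+\|A'\|\|B-B'\|$ together with the uniform bounds. The requirement $\epsilon>0$ in LayerNorm is essential and I would state it as the implementation assumption. Finally, since $v_{\theta,t}=(\widehat m_{\theta,t}-x)/(1-t)$ with $1-t\ge 1-T$, the two bounds feed directly into the hypotheses of Theorem~\ref{thm:wasserstein_control}.
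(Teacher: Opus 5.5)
Your proposal is correct and follows essentially the same route as the paper. With $\epsilon_{\mathrm{LN}}>0$, each normalization is bounded and $\|\gamma\|_\infty/\sqrt{\epsilon_{\mathrm{LN}}}$-Lipschitz, so every nonlinear interaction in the mixture and attention branches acts on a bounded set; product-rule estimates then make each residual branch bounded and Lipschitz, and the constants propagate as $\prod_q(1+\ell_q)$ through the residual stream and the heads. The one detail to add is that the implemented normalizations are time-modulated, $\operatorname{LN}(H)\odot\{1+\gamma(t)\}+\zeta(t)$, so the normalization constants should use $\sup_{0\le t\le T}\|1+\gamma(t)\|_\infty<\infty$ (finite by continuity of $\psi$), as the paper does; your continuity argument for $\sup_t\|\widehat m_{\theta,t}(0)\|_2$ is a valid alternative to the paper's bound $B_0+\sum_q b_q$, which it obtains from uniformly bounded residual branches.
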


\begin{proof}
We use the Euclidean norm for vectors, the Frobenius norm for token matrices, and $\|\cdot\|_{\mathrm{op}}$ for the operator norm. Assuming the schema and architecture are finite, all spaces appearing below are finite-dimensional. All constants may depend on the fixed architecture, the parameter setting $\theta$, and $T$, but not on the input $x$.

We first establish uniform boundedness and Lipschitz continuity of the
normalization operators used in the network. We then propagate these
properties through the adaptive-mixture networks, joint-attention, and endpoint heads.

\paragraph{Time-conditioned normalization.}
Let
\begin{align*}
P
&=
I_{d_h}
-
\frac{1}{d_h}\mathbf 1\mathbf 1^\top
\end{align*}
be the centering matrix. For $\epsilon_{\mathrm{LN}}>0$, layer
normalization has the form
\begin{align*}
\operatorname{LN}_{\gamma,\beta}(z)
&=
\gamma\odot
\frac{Pz}{
\left\{
d_h^{-1}\|Pz\|_2^2+\epsilon_{\mathrm{LN}}
\right\}^{1/2}
}
+\beta.
\end{align*}
Because
\begin{align*}
\frac{\|Pz\|_2}{
\left\{
d_h^{-1}\|Pz\|_2^2+\epsilon_{\mathrm{LN}}
\right\}^{1/2}
}
&\leq
\sqrt{d_h},
\end{align*}
we obtain
\begin{align*}
\sup_{z\in\mathbb R^{d_h}}
\left\|
\operatorname{LN}_{\gamma,\beta}(z)
\right\|_2
&\leq
\sqrt{d_h}\|\gamma\|_\infty+\|\beta\|_2
=:B_{\mathrm{LN}}.
\end{align*}

The positive value of $\epsilon_{\mathrm{LN}}$ prevents the normalization denominator from approaching zero. The Jacobian of the normalization map is therefore uniformly bounded, and hence
\begin{align*}
\left\|
\operatorname{LN}_{\gamma,\beta}(z)
-
\operatorname{LN}_{\gamma,\beta}(z')
\right\|_2
&\leq
\frac{\|\gamma\|_\infty}{
\sqrt{\epsilon_{\mathrm{LN}}}
}
\|z-z'\|_2
\\
&=:K_{\mathrm{LN}}\|z-z'\|_2.
\end{align*}

Each processing or attention branch uses its own normalization. We write a generic such operator as
\begin{align*}
\mathcal N_q(t,z)
&=
a_q(t)\odot
\operatorname{LN}_{\gamma_q,\beta_q}(z)
+
c_q(t).
\end{align*}

For the implementation in Equation~(9), $a_q(t)=1+\gamma_q(t)$ and $c_q(t)=\zeta_q(t)$. Since the time embedding and all affine maps are continuous in $t$, compactness of $[0,T]$ gives
\begin{align*}
A_q
&:=
\sup_{0\leq t\leq T}\|a_q(t)\|_\infty
<\infty,
&
C_q
&:=
\sup_{0\leq t\leq T}\|c_q(t)\|_2
<\infty.
\end{align*}
It follows that
\begin{align*}
\sup_{\substack{0\leq t\leq T\\z\in\mathbb R^{d_h}}}
\|\mathcal N_q(t,z)\|_2
&\leq
A_qB_{\mathrm{LN},q}+C_q
=:B_{\mathcal N,q},
\\
\|\mathcal N_q(t,z)-\mathcal N_q(t,z')\|_2
&\leq
A_qK_{\mathrm{LN},q}\|z-z'\|_2
\\
&=:K_{\mathcal N,q}\|z-z'\|_2.
\end{align*}

Applying the same normalization to a token matrix preserves Lipschitz continuity in the Frobenius norm. Since each schema has finitely many tokens, there are finite constants $\overline B_{\mathcal N,q}$ and $\overline K_{\mathcal N,q}$ satisfying
\begin{align*}
\sup_{t,H}\|\mathcal N_q(t,H)\|_{\mathrm F}
&\leq
\overline B_{\mathcal N,q},
\\
\|\mathcal N_q(t,H)-\mathcal N_q(t,H')\|_{\mathrm F}
&\leq
\overline K_{\mathcal N,q}
\|H-H'\|_{\mathrm F}.
\end{align*}

\paragraph{Adaptive-mixture processing.}
Consider a mixture of networks for feature type $r\in\{\mathrm{num},\mathrm{cat}\}$ at a fixed layer. To simplify notation, omit the layer index and write
\begin{align*}
\overline H_r
&=
\mathcal N_r^{\mathrm{mix}}(t,H_r),
\\
R_r
&=
\frac{\overline H_r\Phi_r}{\sqrt{d_h}},
\\
D_r
&=
\operatorname{softmax}_{\mathrm{col}}(R_r),
&
C_r
&=
\operatorname{softmax}_{\mathrm{row}}(R_r).
\end{align*}
The two softmax operations correspond to aggregation across feature tokens and recombination across latent components in Equations~(4)--(5) of the main paper.

The normalization bounds imply
\begin{align*}
\|R_r\|_{\mathrm F}
&\leq
\frac{
\overline B_{\mathcal N,r}
\|\Phi_r\|_{\mathrm{op}}
}{
\sqrt{d_h}
},
\\
\|R_r-R_r'\|_{\mathrm F}
&\leq
\frac{
\overline K_{\mathcal N,r}
\|\Phi_r\|_{\mathrm{op}}
}{
\sqrt{d_h}
}
\|H_r-H_r'\|_{\mathrm F}.
\end{align*}
Softmax on a fixed finite-dimensional axis is bounded and Lipschitz. Therefore, finite constants $B_{D,r},B_{C,r},K_{D,r}$, and $K_{C,r}$ exist such that
\begin{align*}
\|D_r\|_{\mathrm F}
&\leq B_{D,r},  \quad
\|C_r\|_{\mathrm F}
\leq B_{C,r},
\\
\|D_r-D_r'\|_{\mathrm F}
&\leq
K_{D,r}\|H_r-H_r'\|_{\mathrm F},
\\
\|C_r-C_r'\|_{\mathrm F}
&\leq
K_{C,r}\|H_r-H_r'\|_{\mathrm F}.
\end{align*}
The aggregated components are $U_r=D_r^\top\overline H_r.$
They are uniformly bounded because
\begin{align*}
\|U_r\|_{\mathrm F}
&\leq 
\|D_r\|_{\mathrm F}
\|\overline H_r\|_{\mathrm F}
\\
&\leq
B_{D,r}\overline B_{\mathcal N,r}
=:B_{U,r}.
\end{align*}
For two inputs $H_r$ and $H_r'$, we have
\begin{align*}
U_r-U_r'
&=
D_r^\top(\overline H_r-\overline H_r')
+
(D_r-D_r')^\top\overline H_r'.
\end{align*}
Hence,
\begin{align*}
\|U_r-U_r'\|_{\mathrm F}
&\leq
\left(
B_{D,r}\overline K_{\mathcal N,r}
+
K_{D,r}\overline B_{\mathcal N,r}
\right)
\|H_r-H_r'\|_{\mathrm F}
\\
&=:K_{U,r}\|H_r-H_r'\|_{\mathrm F}.
\end{align*}

All components of $U_r$ lie in the compact set
\begin{align*}
\mathcal K_{U,r}
&=
\left\{
u:\|u\|_2\leq B_{U,r}
\right\}.
\end{align*}
Every specialized subnetwork is a fixed finite composition of affine maps and continuously differentiable activations. Since there are finitely many subnetworks, continuity on $\mathcal K_{U,r}$ gives
\begin{align*}
B_{g,r}
&:=
\max_m
\sup_{u\in\mathcal K_{U,r}}
\|g_{r,m}(u)\|_2
<\infty,
\\
K_{g,r}
&:=
\max_m
\sup_{u\in\mathcal K_{U,r}}
\|Dg_{r,m}(u)\|_{\mathrm{op}}
<\infty.
\end{align*}
If $V_r$ denotes the matrix obtained by stacking the transformed components, the finite number of components implies that there are constants $B_{V,r},K_{V,r}<\infty$ satisfying
\begin{align*}
\|V_r\|_{\mathrm F}
&\leq
B_{V,r},
\\
\|V_r-V_r'\|_{\mathrm F}
&\leq
K_{V,r}\|H_r-H_r'\|_{\mathrm F}.
\end{align*}

The adaptive-mixture residual network is
\begin{align*}
\mathcal B_r^{\mathrm{mix}}(t,H_r)
&=
C_rV_r.
\end{align*}
Using
\begin{align*}
C_rV_r-C_r'V_r'
&=
C_r(V_r-V_r')
+
(C_r-C_r')V_r',
\end{align*}
we obtain
\begin{align*}
\|\mathcal B_r^{\mathrm{mix}}(t,H_r)\|_{\mathrm F}
&\leq
B_{C,r}B_{V,r}
=:b_r^{\mathrm{mix}},
\\
\|\mathcal B_r^{\mathrm{mix}}(t,H_r)
- &
\mathcal B_r^{\mathrm{mix}}(t,H_r')\|_{\mathrm F}
\\&\leq
\left(
B_{C,r}K_{V,r}
+
K_{C,r}B_{V,r}
\right)
\|H_r-H_r'\|_{\mathrm F}
\\
&=:
\ell_r^{\mathrm{mix}}
\|H_r-H_r'\|_{\mathrm F}.
\end{align*}
Thus every adaptive-mixture residual network is uniformly bounded and globally Lipschitz.

\paragraph{Joint attention.}
Let $H$ denote the concatenated numerical and categorical tokens entering a joint-attention network, and define
\begin{align*}
Z
&=
\mathcal N^{\mathrm{att}}(t,H).
\end{align*}
The normalization bounds provide finite constants $B_Z,K_Z$ such that
\begin{align*}
\|Z\|_{\mathrm F}
&\leq B_Z,
\\
\|Z-Z'\|_{\mathrm F}
&\leq K_Z\|H-H'\|_{\mathrm F}.
\end{align*}

For one attention head, write
\begin{align*}
Q
&=
ZW_Q+\mathbf 1b_Q^\top,
&
K
&=
ZW_K+\mathbf 1b_K^\top,
&
V
&=
ZW_V+\mathbf 1b_V^\top.
\end{align*}
Because these maps are affine and $Z$ is bounded, there exist finite constants $B_Q,B_K,B_V,K_Q,K_K,K_V$ such that
\begin{align*}
\|Q\|_{\mathrm F}
\leq B_Q,\;
\|K\|_{\mathrm F}
&\leq B_K, \;
\|V\|_{\mathrm F}
\leq B_V,
\\
\|Q-Q'\|_{\mathrm F}
&\leq K_Q\|H-H'\|_{\mathrm F},
\\
\|K-K'\|_{\mathrm F}
&\leq K_K\|H-H'\|_{\mathrm F},
\\
\|V-V'\|_{\mathrm F}
&\leq K_V\|H-H'\|_{\mathrm F}.
\end{align*}

Let
\begin{align*}
A
&=
\operatorname{softmax}_{\mathrm{row}}
\left(
\frac{QK^\top}{\sqrt{d_{\mathrm{head}}}}
\right)
\end{align*}
be the attention-weight matrix. The difference between two attention matrices satisfies
\begin{align*}
QK^\top-Q'K'^\top
&=
(Q-Q')K^\top
+
Q'(K-K')^\top.
\end{align*}
Consequently,
\begin{align*}
\|QK^\top-Q'K'^\top\|_{\mathrm F}
&\leq
\left(
K_QB_K+B_QK_K
\right)
\|H-H'\|_{\mathrm F}.
\end{align*}
Since row-wise softmax is bounded and Lipschitz on a fixed finite-dimensional space, there exist $B_A,K_A<\infty$ such that
\begin{align*}
\|A\|_{\mathrm F}
&\leq B_A,
\\
\|A-A'\|_{\mathrm F}
&\leq K_A\|H-H'\|_{\mathrm F}.
\end{align*}
The head output $O=AV$ therefore satisfies
\begin{align*}
\|O\|_{\mathrm F}
&\leq
B_AB_V,
\\
\|O-O'\|_{\mathrm F}
&\leq
\left(
B_AK_V+K_AB_V
\right)
\|H-H'\|_{\mathrm F}.
\end{align*}

A finite concatenation of attention heads followed by a finite affine output projection preserves boundedness and global Lipschitz continuity. Thus the complete multi-head attention network in Equation~(6) of the main paper satisfies
\begin{align*}
\sup_{t,H}
\|\mathcal B^{\mathrm{att}}(t,H)\|_{\mathrm F}
&\leq
b^{\mathrm{att}}
<\infty,
\\
\|\mathcal B^{\mathrm{att}}(t,H)
-
\mathcal B^{\mathrm{att}}(t,H')\|_{\mathrm F}
&\leq
\ell^{\mathrm{att}}
\|H-H'\|_{\mathrm F}
\end{align*}
for finite constants $b^{\mathrm{att}}$ and
$\ell^{\mathrm{att}}$.

\paragraph{Propagation through the residual network.}
Flatten the finite collection of token matrices into a single Euclidean vector and index all adaptive-mixture and attention residual networks in their computational order by $q=1,\ldots,Q$. Write
\begin{align*}
H_q(t,x)
&=
H_{q-1}(t,x)
+
\mathcal B_q
\left(
t,H_{q-1}(t,x)
\right).
\end{align*}
The preceding arguments show that finite constants $b_q,\ell_q$ exist such that
\begin{align*}
\sup_{t,H}\|\mathcal B_q(t,H)\|_2
&\leq b_q,
\\
\|\mathcal B_q(t,H)-\mathcal B_q(t,H')\|_2
&\leq
\ell_q\|H-H'\|_2.
\end{align*}

The initial tokenization is affine in $x$ and depends on the time embedding. Compactness of $[0,T]$ therefore gives
\begin{align*}
B_0
&:=
\sup_{0\leq t\leq T}
\|H_0(t,0)\|_2
<\infty,
\\
K_0
&:=
\sup_{0\leq t\leq T}
\operatorname{Lip}_x H_0(t,\cdot)
<\infty.
\end{align*}
At the origin,
\begin{align*}
\|H_q(t,0)\|_2
&\leq
\|H_{q-1}(t,0)\|_2+b_q.
\end{align*}
Induction over the finite residual sequence gives
\begin{align*}
\sup_{0\leq t\leq T}
\|H_Q(t,0)\|_2
&\leq
B_0+\sum_{q=1}^Q b_q.
\end{align*}

For arbitrary $x,y\in\mathbb R^D$,
\begin{align*}
\|H_q(t,x)&-H_q(t,y)\|_2
\\&\leq
\|H_{q-1}(t,x)-H_{q-1}(t,y)\|_2
\\
&\quad+
\left\|
\mathcal B_q(t,H_{q-1}(t,x))
-
\mathcal B_q(t,H_{q-1}(t,y))
\right\|_2
\\
&\leq
(1+\ell_q)
\|H_{q-1}(t,x)-H_{q-1}(t,y)\|_2.
\end{align*}
A second induction yields
\begin{align*}
\|H_Q(t,x)-H_Q(t,y)\|_2
&\leq
K_0
\prod_{q=1}^Q(1+\ell_q)
\|x-y\|_2.
\end{align*}

\paragraph{Endpoint heads.}
The endpoint heads consist of layer normalization with a positive epsilon, finite affine maps, and categorical softmax maps. They therefore define a Lipschitz output map with a finite value at the origin. Hence, there are finite constants $B_{\mathrm{out}}$ and $K_{\mathrm{out}}$ such that
\begin{align*}
\|\widehat m_{\theta,t}(0)\|_2
&\leq
B_{\mathrm{out}}
+
K_{\mathrm{out}}
\|H_Q(t,0)\|_2,
\\
\|\widehat m_{\theta,t}(x)
-
\widehat m_{\theta,t}(y)\|_2
&\leq
K_{\mathrm{out}}
\|H_Q(t,x)-H_Q(t,y)\|_2.
\end{align*}
Therefore, we may take
\begin{align*}
B_{\theta,T}
&=
B_{\mathrm{out}}
+
K_{\mathrm{out}}
\left(
B_0+\sum_{q=1}^Q b_q
\right),
\\
K_{\theta,T}
&=
K_{\mathrm{out}}K_0
\prod_{q=1}^Q(1+\ell_q).
\end{align*}
These constants satisfy
\begin{align*}
\sup_{0\leq t\leq T}
\|\widehat m_{\theta,t}(0)\|_2
&\leq
B_{\theta,T},
\\
\|\widehat m_{\theta,t}(x)
-
\widehat m_{\theta,t}(y)\|_2
&\leq
K_{\theta,T}\|x-y\|_2.
\end{align*}

Every operation used above is continuous in its finite-dimensional arguments, and the time embedding is continuous on $[0,T]$. Since FUSE is a finite composition of these operations, $(t,x)\mapsto\widehat m_{\theta,t}(x)$ is jointly continuous.

Finally, the endpoint predictor induces
\begin{align*}
v_{\theta,t}(x)
&=
\frac{\widehat m_{\theta,t}(x)-x}{1-t}.
\end{align*}
Since $t\leq T<1$,
\begin{align*}
\|v_{\theta,t}(x)-v_{\theta,t}(y)\|_2
&\leq
\frac{K_{\theta,T}+1}{1-T}
\|x-y\|_2,
\\
\|v_{\theta,t}(0)\|_2
&\leq
\frac{B_{\theta,T}}{1-T},
\\
\int_0^T
\|v_{\theta,t}(0)\|_2\,dt
&\leq
\frac{TB_{\theta,T}}{1-T}
<\infty.
\end{align*}
Joint continuity implies joint Borel measurability. Thus $v_{\theta,t}$ is jointly measurable, uniformly Lipschitz in $x$, and integrable at the origin, as required by Theorem~1.
\end{proof}

\section{Related Work}
\label{sec:related-work}

\paragraph{Mixed-Type Tabular Generation.}
Early approaches such as CTGAN and TVAE combine type-specific preprocessing with adversarial or variational objectives \citep{xu2019ctgan}. Diffusion-based methods represent mixed-type variables in different ways. TabDDPM and CoDi couple continuous and categorical diffusion processes, while TabDiff uses a joint continuous-time model with feature-wise learnable noise schedules \citep{kotelnikov2023tabddpm,lee2023codi,shi2025tabdiff}. TabSyn performs diffusion in a learned continuous latent space, CDTD applies continuous diffusion to both variable types, and TabRep constructs a unified continuous representation \citep{zhang2024tabsyn,mueller2025cdtd,si2026tabrep}. TabNAT instead combines diffusion-based numerical generation with masked categorical generation \citep{zhang2025tabnat}. FUSE follows EF-VFM in representing categorical variables as one-hot vectors along a Euclidean interpolation, while revising the network structure used to predict the endpoint-factor parameters.

\paragraph{Variational Flow Matching.}
Flow matching learns a velocity field that transports a base distribution along a prescribed probability path \citep{lipman2023flow,albergo2023stochastic}. Variational flow matching expresses this field through a variational approximation to the posterior distribution over endpoints \citep{eijkelboom2024vfm}. When the conditional velocity field is linear in the endpoint, the resulting field depends only on the marginal endpoint means. This permits a mean-field factorization without requiring the joint posterior covariance. EF-VFM extends this formulation to mixed-type data using exponential-family endpoint factors and moment matching, with TabbyFlow providing its tabular implementation \citep{guzman2025efvfm}. Other tabular flow methods replace neural predictors with gradient-boosted trees, impose geometric constraints, adopt cascaded generation, or examine the effects of probability paths and sampling schemes \citep{jolicoeurmartineau2024forest,choi2026tabgeoflow, mueller2026cascaded,nasution2026flow}. FUSE retains the probability path and endpoint-factor family of EF-VFM, but focuses on how type-specific processing and full-state conditioning should be organized.

\paragraph{Specialized Neural Processing.} 
Mixture architectures process representations through specialized subnetworks whose contributions depend on the input. Sparse mixture models use learned gates to select a subset of subnetworks, whereas Soft MoE replaces discrete assignment with differentiable dispatch and combination weights over weighted token aggregates \citep{shazeer2017outrageously,puigcerver2024softmoe}. A related architectural pattern separates modality-specific feed-forward processing from cross-modal interaction. VLMo combines modality-specific feed-forward modules with shared self-attention, while multimodal diffusion transformers use separate modality-specific parameters while allowing joint attention \citep{bao2022vlmo,esser2024scaling}. These architectures show that specialized processing can coexist with unrestricted interaction across input types.
\section{Implementation Details}
\label{app:implementation-details}

This section specifies the implementation choices omitted from the main paper. It describes time conditioning, the architectural parameterization, and capacity matching.

\subsection{Time Conditioning}

Let $d_h\geq 2$ denote the hidden dimension and let
$d_\omega=\lfloor d_h/2\rfloor$. The time $t$ is embedded as
\begin{align}
\omega_k
&=
\exp\!\left\{
-\frac{k\log(10000)}{\max\{1,d_\omega\}}
\right\},
\quad
k=0,\ldots,d_\omega-1,
\notag\\
s(t)
&=
\bigl(
\{\sin(t\omega_k)\}_{k=0}^{d_\omega-1},
\{\cos(t\omega_k)\}_{k=0}^{d_\omega-1}
\bigr),
\notag\\
\psi(t)
&=
W_2\operatorname{SiLU}\!\left(W_1s(t)+b_1\right)+b_2.
\label{eq:supp-time-embedding}
\end{align}
When $d_h$ is odd, a zero coordinate is appended to $s(t)$ so that its dimension equals $d_h$. The feature-specific affine maps described in the main paper add a type-specific projection of $\psi(t)$ to each initial token.

Each processing and attention branch uses independent time-conditioned normalization parameters. Let $q\in\{\mathrm{proc},\mathrm{attn}\}$ index the branch. For feature type $r\in\{\mathrm{num},\mathrm{cat}\}$ and layer $\ell$,
\begin{align}
\bigl(
\zeta_{r,q}^{(\ell)}(t),
\gamma_{r,q}^{(\ell)}(t)
\bigr)
&=
W_{r,q}^{(\ell)}
\operatorname{SiLU}\!\left(\psi(t)\right)
+
b_{r,q}^{(\ell)},
\notag\\
\mathcal N_{r,q}^{(\ell)}(H;t)
&=
\operatorname{LN}(H)
\odot
\left\{1+\gamma_{r,q}^{(\ell)}(t)\right\}
+
\zeta_{r,q}^{(\ell)}(t).
\label{eq:supp-adaptive-normalization}
\end{align}
Layer normalization is applied without learned affine parameters and uses $\epsilon_{\mathrm{LN}}=10^{-5}$. The output maps producing $\zeta_{r,q}^{(\ell)}(t)$ and $\gamma_{r,q}^{(\ell)}(t)$ are zero-initialized. The normalized representation used by adaptive mixture processing is
\[
\overline H_{r,t}^{(\ell)}
=
\mathcal N_{r,\mathrm{proc}}^{(\ell)}
\bigl(H_{r,t}^{(\ell)};t\bigr),
\]
while the attention uses its own normalization parameters.

\subsection{Architecture Specification}
\paragraph{Adaptive mixture processing.}
The implementation computes the alignment score matrix as
\begin{align}
\widetilde R_{r,t}^{(\ell)}
&=
\overline H_{r,t}^{(\ell)}\Phi_r^{(\ell)}.
\label{eq:supp-unscaled-score}
\end{align}
Since $\Phi_r^{(\ell)}$ is learned, this omission changes its parameterization but not the represented family of alignment scores. Each specialized subnetwork contains two linear layers with a GELU activation. No dropout or learned residual gate is used within the processing module.

\paragraph{Joint attention.}
Joint attention is applied without an attention mask. The restricted-attention variant retains the same parameterization but masks the two cross-type attention blocks, allowing interactions only within each feature type. Every configuration contains $L=4$ blocks, each consisting of one processing module followed by one attention module. Attention dropout is set to zero.

\paragraph{Endpoint heads.}
After the final block, numerical and categorical tokens are normalized using separate affine layer-normalization modules,
\begin{align*}
h_{r,j,t}^{(L)}
&=
\operatorname{LN}_{r}
\left(
\left[H_{r,t}^{(L)}\right]_{j}
\right),
\quad
r\in\{\mathrm{num},\mathrm{cat}\},
\quad
L=4.
\label{eq:supp-final-token-normalization}
\end{align*}
Thus, $h_{r,j,t}^{(L)}$ in the endpoint-head equations of the main paper denotes the normalized final token rather than the corresponding row of $H_{r,t}^{(L)}$ before normalization. The numerical head applies no additional output activation, while the categorical heads retain the feature-wise parameterization defined in the main paper. The final normalization uses $\epsilon_{\mathrm{LN}}=10^{-5}$. For fixed learned parameters, this positive normalization constant ensures bounded final token representations, as required in Lemma~\ref{lem:regularity_FUSE}.

\begin{table*}[t]
\centering
\small
\setlength{\tabcolsep}{0.9mm}
\begin{tabular}{lrrrrrrrrr}
\toprule
\multicolumn{10}{l}{\textbf{$\alpha$-Precision ($\uparrow$)}} \\
Method & Adult & Default & Beijing & Shoppers & Magic & News & Diabetes & Fault & Avg. rank $\downarrow$ \\
\cmidrule(lr){1-10}
CTGAN & 0.715(.003) & 0.683(.002) & 0.916(.003) & 0.821(.003) & 0.789(.004) & \underline{0.974}(.002) & 0.640(.013) & 0.656(.012) & 6.63 \\
TVAE & 0.923(.003) & 0.828(.004) & 0.859(.003) & 0.567(.004) & 0.951(.003) & 0.857(.003) & 0.774(.018) & 0.878(.014) & 6.50 \\
CoDi & 0.497(.003) & 0.710(.001) & 0.949(.003) & 0.727(.005) & 0.842(.005) & 0.523(.002) & 0.896(.024) & 0.924(.033) & 6.38 \\
TabDDPM & 0.971(.004) & \textbf{0.994}(.002) & 0.977(.002) & 0.938(.004) & \underline{0.992}(.003) & 0.000(.000) & 0.876(.206) & 0.003(.012) & 5.00 \\
TabSyn & 0.974(.003) & 0.924(.003) & 0.987(.003) & 0.978(.006) & 0.991(.003) & 0.968(.003) & 0.955(.025) & 0.936(.018) & 3.63 \\
TabbyFlow & \underline{0.991}(.002) & 0.990(.002) & \underline{0.994}(.002) & \textbf{0.993}(.002) & 0.989(.004) & 0.918(.003) & \textbf{0.972}(.012) & \underline{0.955}(.015) & \underline{2.63} \\
TabDiff & 0.979(.003) & 0.982(.002) & 0.981(.003) & \underline{0.992}(.002) & 0.992(.003) & 0.965(.003) & 0.926(.029) & 0.697(.051) & 3.75 \\
\rowcolor[gray]{0.94}
FUSE & \textbf{0.992}(.002) & \underline{0.993}(.002) & \textbf{0.996}(.002) & 0.985(.005) & \textbf{0.994}(.003) & \textbf{0.983}(.006) & \underline{0.969}(.013) & \textbf{0.976}(.012) & \textbf{1.50} \\
\midrule
\multicolumn{10}{l}{\textbf{$\beta$-Recall ($\uparrow$)}} \\
Method & Adult & Default & Beijing & Shoppers & Magic & News & Diabetes & Fault & Avg. rank $\downarrow$ \\
\cmidrule(lr){1-10}
CTGAN & 0.270(.002) & 0.169(.002) & 0.260(.002) & 0.286(.004) & 0.082(.003) & 0.161(.002) & 0.078(.009) & 0.007(.002) & 7.13 \\
TVAE & 0.340(.003) & 0.185(.001) & 0.170(.002) & 0.173(.003) & 0.328(.003) & 0.309(.004) & 0.346(.010) & 0.113(.007) & 6.50 \\
CoDi & 0.229(.003) & 0.059(.002) & 0.300(.003) & 0.305(.004) & 0.491(.003) & 0.112(.001) & 0.688(.021) & 0.613(.025) & 5.50 \\
TabDDPM & 0.479(.002) & 0.467(.004) & 0.400(.002) & \underline{0.527}(.006) & \textbf{0.523}(.004) & 0.000(.000) & \underline{0.713}(.173) & 0.000(.000) & 4.00 \\
TabSyn & 0.428(.003) & 0.403(.002) & 0.307(.002) & 0.494(.006) & 0.481(.005) & 0.443(.002) & 0.510(.022) & 0.338(.040) & 4.38 \\
TabbyFlow & \textbf{0.548}(.003) & \underline{0.517}(.003) & \textbf{0.552}(.003) & \textbf{0.600}(.005) & 0.473(.004) & \textbf{0.578}(.002) & 0.622(.028) & \underline{0.619}(.017) & \textbf{2.13} \\
TabDiff & \underline{0.538}(.003) & \textbf{0.520}(.002) & 0.472(.002) & 0.487(.005) & \underline{0.493}(.004) & 0.387(.003) & 0.466(.033) & 0.036(.012) & 3.63 \\
\rowcolor[gray]{0.94}
FUSE & 0.497(.004) & 0.452(.004) & \underline{0.551}(.002) & 0.526(.005) & 0.468(.005) & \underline{0.444}(.009) & \textbf{0.767}(.027) & \textbf{0.703}(.028) & \underline{2.75} \\
\bottomrule
\end{tabular}

\caption{$\alpha$-Precision and $\beta$-Recall across eight datasets. Results are reported as means with standard deviations in parentheses.
Bold and underlined values indicate the best and second-best results, respectively, and the proposed method is shaded.}
\label{tab:supp-alpha-beta}
\end{table*}

\subsection{Capacity Matching}

For each feature type, the dense FFN baseline uses a two-layer feed-forward module with hidden width $d_{\mathrm{standard}}=4096$, equal to the sum of the widths of the four specialized subnetworks. Across the eight datasets, FUSE contains between $20.264$ and $20.317$ million trainable parameters, while the dense processing variants contain between $20.242$ and $20.295$ million. The difference is $22{,}528$ parameters, corresponding to less than $0.12\%$ of the total parameter count.

A leading-order multiply-accumulate comparison provides additional context for the processing modules. Let
$
N=d_{\mathrm{num}}+d_{\mathrm{cat}}
$
denote the total number of numerical and categorical feature tokens. In the evaluated configuration,
\[
S_{\mathrm{num}}=S_{\mathrm{cat}}=S=8,
\quad
d_{\mathrm{ff}}=1024,
\quad
d_{\mathrm{std}}=4096.
\]
The per-observation counts for one processing layer are
\begin{align*}
\operatorname{MAC}_{\mathrm{adap}}
&=
4Sd_hd_{\mathrm{ff}}+3NSd_h,
\notag\\
\operatorname{MAC}_{\mathrm{std}}
&=
2Nd_hd_{\mathrm{std}}.
\end{align*}
At the evaluated widths, the ratio
$\operatorname{MAC}_{\mathrm{std}}/
\operatorname{MAC}_{\mathrm{adap}}$
ranges from $2.24$ to $11.59$. These counts include only the leading matrix products in the processing modules. They exclude attention, normalization, activation functions, and other element-wise operations and therefore are not interpreted as end-to-end runtime measurements.
\section{Experimental Details}
\label{app:experimental-details}

This section provides additional details for the experiments in the main paper.

\begin{table}[t]
\centering
\small
\setlength{\tabcolsep}{3.0pt}
\begin{tabular}{llrrrr}
\toprule
Dataset & Task & Train & Test & Num. & Cat. \\
\midrule
Adult & Binary & 32,561 & 16,281 & 6 & 9 \\
Default & Binary & 27,000 & 3,000 & 14 & 10 \\
Beijing & Regression & 37,581 & 4,176 & 7 & 5 \\
Shoppers & Binary & 11,097 & 1,233 & 10 & 8 \\
Magic & Binary & 17,117 & 1,902 & 10 & 1 \\
News & Regression & 35,679 & 3,965 & 46 & 2 \\
Diabetes & Binary & 691 & 77 & 8 & 1 \\
Fault & Multiclass & 1,746 & 195 & 27 & 1 \\
\bottomrule
\end{tabular}
\caption{Downstream tasks, processed train and test split sizes, and variable counts. Binary and multiclass denote classification tasks. Num. and Cat. denote numerical and categorical variables, respectively. The prediction target is included in its corresponding variable type.}
\label{tab:dataset-summary}
\end{table}

\paragraph{Datasets. }
Table~\ref{tab:dataset-summary} summarizes the profiles of the datasets. Adult, Default, Shoppers, Magic, and Diabetes define binary classification tasks, while Fault defines a multiclass classification task. Beijing and News are regression datasets. 

\paragraph{Training. }

All FUSE configurations use hidden dimension $d_h=256$, four layers, and four attention heads. For each feature type, adaptive mixture processing uses $M_r=4$ parallel subnetworks, $P_r=2$ latent components per subnetwork, and hidden width $d_{\mathrm{ff}}=1024$. Training proceeds using AdamW with learning rate $10^{-3}$. The model parameters are maintained using an exponential moving average with decay $0.997$. Batch sizes are $1024$ for Adult, Magic, and News, $672$ for Beijing, and $512$ for the remaining datasets. The numerical endpoint variance is
$
\nu_t
=
1-t^2$ for $t\in[0,1).$
This schedule specifies the variance of the numerical endpoint factor and satisfies $\nu_t\rightarrow0$ as $t\rightarrow1$. 

\paragraph{Sampling. }

\begin{figure*}[t!]
\centering
\includegraphics[width=\textwidth]
{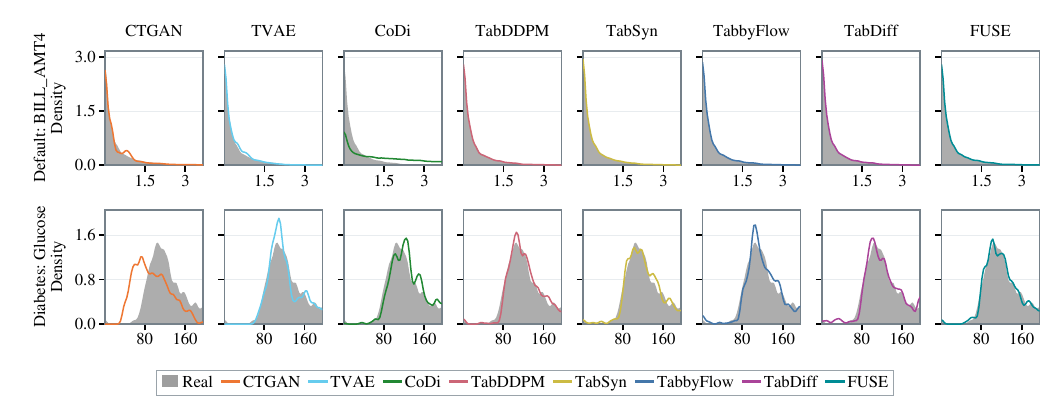}
\caption{Numerical marginal distributions on Default and Diabetes. Gray denotes the real data.}
\label{fig:supp-numerical-marginals}
\end{figure*}

\begin{figure*}[t]
\centering
\includegraphics[width=\textwidth]
{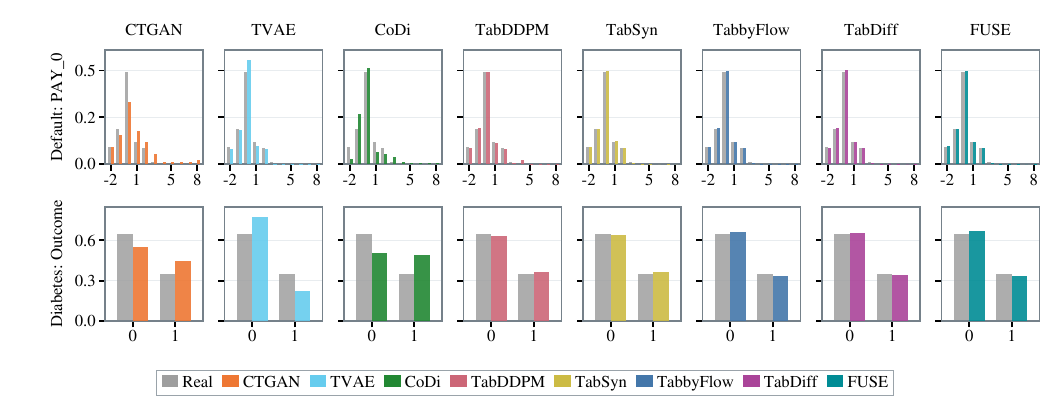}
\caption{Categorical marginals on Default and Diabetes. Gray denotes the real data.}
\label{fig:supp-categorical-marginals}
\end{figure*}

Generation begins from $
\widehat X_0\sim\mathcal N(0,I_D),$
where $D$ is the dimension of the transformed row representation. The resulting ODE is integrated to $t_{\max}=0.999$ using an adaptive fifth-order Dormand--Prince solver. Both relative and absolute tolerances are set to $10^{-5}$, and the exponential-moving-average parameters are used for generation. Each generated table contains $n_{\mathrm{train}}$ observations.
\begin{figure*}[t]
\centering
\includegraphics[width=\textwidth]
{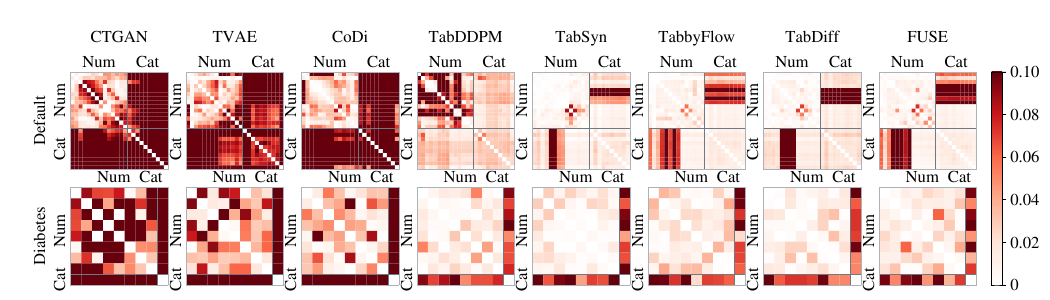}
\caption{Absolute differences between pairwise correlations from real and synthetic data (Default and Diabetes). Values closer to zero indicate more faithful preservation of feature correlation.}
\label{fig:supp-additional-heatmaps}
\end{figure*}

\section{Additional Experiments}
\label{app:additional-experiments}

The following experiments extend the main evaluation with support-based metrics, and additional visual comparisons.

\subsection{$\alpha$-Precision and $\beta$-Recall}
Table~\ref{tab:supp-alpha-beta} reports $\alpha$-Precision and $\beta$-Recall across all eight datasets. The former evaluates whether synthetic observations remain within high-probability regions of the real distribution, while the latter evaluates how well the synthetic distribution covers the support of the real data \citep{alaa2022faithful}. Reporting both metrics distinguishes fidelity to the data support from support coverage.

FUSE achieves the best average rank for $\alpha$-Precision at $1.50$ and the second-best average rank for $\beta$-Recall at $2.75$, following TabbyFlow at $2.13$. It also obtains the highest $\beta$-Recall on Diabetes and Fault. These results show that the strong support precision of FUSE is accompanied by competitive coverage of the real distribution.

\subsection{Additional Synthetic Data Visualizations}

Figures~\ref{fig:supp-numerical-marginals} and~\ref{fig:supp-categorical-marginals} compare the marginal distributions generated by all eight methods on Default and Diabetes, which are not included in the main marginal visualizations. Each numerical plotting range is determined from the corresponding real training data, while every category observed in the real data is retained in the categorical panels.

FUSE closely follows the numerical distribution in Default and the unimodal distribution in Diabetes. In both cases, it reproduces the dominant probability mass while retaining the overall tail behavior. These observations extend the marginal comparisons in the main paper to two additional datasets and are consistent with the aggregate Shape results.

For categorical variables, FUSE preserves both the dominant and lower-frequency levels in Default and closely reproduces the binary class balance in Diabetes. The categorical panels complement the numerical comparisons by illustrating the two components summarized by the Shape metric.

Figure~\ref{fig:supp-additional-heatmaps} extends the dependence analysis to Default and Diabetes and includes the four baselines omitted from the main heatmap. Unlike Trend, which averages similarities over pairs selected according to their dependence in the real data, the heatmaps display every variable pair. Numerical pairs report one half of the absolute difference between the real and synthetic Pearson correlations. Pairs involving a categorical variable report the total variation distance between the corresponding contingency tables. Numerical variables in mixed-type pairs are discretized into ten common bins before the contingency tables are constructed. The shared color scale is capped at $0.1$, matching the main heatmap.
\subsection{Extended Component Analysis}
\label{app:complete-ablation}

\begin{table*}[t]
\centering
\small
\setlength{\tabcolsep}{4.6pt}
\renewcommand{\arraystretch}{0.92}
\begin{tabular}{rrrrrrr}
\toprule
$\rho_{\mathrm{dep}}$ & Shape & Trend & C2ST & MLE &
$\alpha$-Precision & $\beta$-Recall \\
\midrule
0.00 & 0.000(.001) & 0.002(.008) & 0.002(.005) & 0.003(.004) & 0.001(.009) & 0.000(.004) \\
0.20 & 0.000(.001) & 0.002(.007) & 0.000(.007) & 0.104(.004) & -0.003(.007) & 0.000(.004) \\
0.40 & 0.001(.002) & 0.008(.012) & 0.002(.004) & 0.239(.020) & 0.015(.020) & 0.012(.006) \\
0.60 & 0.003(.004) & 0.015(.019) & 0.002(.004) & 0.364(.015) & 0.016(.016) & 0.035(.011) \\
0.80 & 0.000(.001) & 0.028(.018) & 0.002(.004) & 0.482(.035) & 0.004(.015) & 0.102(.005) \\
0.95 & 0.003(.002) & 0.010(.020) & 0.000(.006) & 0.508(.015) & 0.001(.004) & 0.161(.002) \\
\bottomrule
\end{tabular}

\caption{Effects of joint attention across dependence levels. Entries are
means with sample standard deviations over five matched seeds. Positive
values favor joint attention.}
\label{tab:supp-dependence-effects}
\end{table*}

The component analysis in the main paper separates adaptive mixture processing from joint attention across datasets. This section examines joint attention more directly through a controlled cross-type dependence experiment and matched comparisons on the eight datasets.

\begin{figure}[t]
\centering
\includegraphics[width=0.92\columnwidth]
{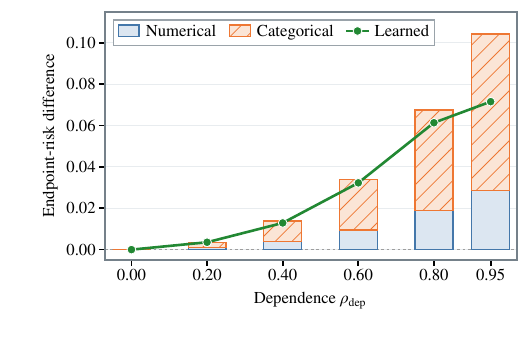}
\caption{Population conditioning penalty and learned endpoint-risk difference across dependence levels. Stacked bars show the numerical and categorical components of the population penalty. Green markers show the mean learned difference. Positive values favor joint attention.}
\label{fig:conditioning-risk}
\end{figure}

\subsubsection{Controlled Cross-Type Dependence.}
\label{app:between-type-association}

To isolate the effect of cross-type dependence, we construct a mixed-type
distribution in which $\rho_{\mathrm{dep}}$ controls the association between
a binary variable and a numerical variable. For
$\rho_{\mathrm{dep}}\in\{0,0.2,0.4,0.6,0.8,0.95\}$, let
\begin{align}
C^\star
&\sim \operatorname{Bernoulli}(1/2),
\qquad
\varepsilon\sim\mathcal N(0,1),
\qquad
\varepsilon\perp C^\star,
\notag\\
Z^\star
&=
\rho_{\mathrm{dep}}(2C^\star-1)
+
\sqrt{1-\rho_{\mathrm{dep}}^2}\,\varepsilon.
\label{eq:synthetic-dependence}
\end{align}
This construction gives
$\operatorname{Corr}(Z^\star,2C^\star-1)=\rho_{\mathrm{dep}}$.
Each observation also contains two independent standard Gaussian variables
and one independent categorical variable. The binary variable
$C^\star$ serves as the downstream target, making its association with
$Z^\star$ the only cross-type dependence.

\begin{table*}[t]
\centering
\setlength{\tabcolsep}{3.8pt}
\renewcommand{\arraystretch}{1.0}
\begin{tabular}{lrrrrrr}
\toprule
Dataset & Shape & Trend & C2ST & MLE & $\alpha$-Precision & $\beta$-Recall \\
\midrule
Adult & 0.001(.001) & 0.030(.001) & 0.001(.005) & 0.034(.001) & 0.004(.001) & 0.295(.004) \\
Default & 0.000(.000) & 0.025(.020) & 0.007(.011) & 0.019(.007) & 0.002(.002) & 0.139(.004) \\
Beijing & 0.000(.001) & 0.032(.003) & -0.001(.003) & 0.164(.004) & 0.007(.001) & 0.047(.003) \\
Shoppers & -0.001(.001) & 0.006(.002) & 0.004(.026) & 0.270(.033) & 0.007(.003) & 0.092(.005) \\
Magic & 0.000(.000) & 0.011(.005) & 0.007(.006) & 0.428(.015) & 0.058(.002) & 0.069(.007) \\
News & 0.000(.001) & 0.003(.000) & 0.000(.007) & 0.007(.001) & 0.001(.003) & 0.217(.007) \\
Diabetes & 0.002(.002) & 0.010(.003) & 0.003(.014) & 0.277(.044) & 0.029(.015) & 0.084(.025) \\
Fault & 0.000(.002) & 0.012(.001) & 0.007(.020) & 0.462(.034) & 0.022(.005) & 0.303(.021) \\
\bottomrule
\end{tabular}

\caption{Paired effects of joint attention on the six evaluation metrics.
Entries are means with sample standard deviations over five matched seeds
after averaging the adaptive and standard processing contrasts. Positive
values favor joint attention.}
\label{tab:supp-joint-attention-effects}
\end{table*}

Joint and restricted attention are compared over five matched seeds at each dependence level. Both configurations use adaptive mixture processing and differ only in their attention scope. Each seed contains $20{,}000$ training and $5{,}000$ test observations. The population conditioning penalty is estimated using $10^6$ Monte Carlo samples, while each pair of fitted models is evaluated on a separate sample of size $10^5$.

Figure~\ref{fig:conditioning-risk} shows that both the population penalty and the learned endpoint-risk reduction are near zero at independence and increase with cross-type dependence. At $\rho_{\mathrm{dep}}=0.95$, they reach $0.1043$ and $0.0715$, respectively. All five learned reductions are positive at every nonzero dependence level. Although the learned reductions compare fitted predictors rather than estimate the population penalty, their direction is consistent with Proposition~1.

Table~\ref{tab:supp-dependence-effects} shows that the MLE gain increases from $0.003$ to $0.508$ at $\rho_{\mathrm{dep}}=0.95$. In contrast, the effects remain at most $0.003$ for Shape and $0.002$ for C2ST. Joint attention therefore becomes most consequential for downstream utility as task-relevant cross-type dependence strengthens.

\subsubsection{Cross-Dataset Evidence.}
\label{app:cross-dataset-attention}

The controlled experiment is complemented by evaluations on the eight benchmark datasets. First, let $R_{j,d,s}^{\mathrm{cat}}$ denote the categorical endpoint risk on dataset $d$ under seed $s$ and attention configuration $j\in\{\mathrm{joint},\mathrm{restricted}\}$. The relative increase under restricted attention is defined as
\begin{align}
\delta_{d,s}^{\mathrm{cat}}
&=
100 \times
\frac{
R_{\mathrm{restricted},d,s}^{\mathrm{cat}}
-
R_{\mathrm{joint},d,s}^{\mathrm{cat}}
}{
R_{\mathrm{joint},d,s}^{\mathrm{cat}}
}.
\label{eq:relative-categorical-risk}
\end{align}

\begin{table}[t]
\centering
\small
\setlength{\tabcolsep}{4.0pt}
\renewcommand{\arraystretch}{0.90}
\begin{tabular}{lrr}
\toprule
Dataset & Relative increase (\%) & Positive pairs \\
\midrule
Adult & 4.650 (0.051) & 5/5 \\
Default & 15.722 (0.157) & 5/5 \\
Beijing & 5.009 (0.037) & 5/5 \\
Shoppers & 2.048 (0.179) & 5/5 \\
Magic & 9.630 (0.121) & 5/5 \\
News & 13.562 (0.806) & 5/5 \\
Diabetes & 10.826 (1.689) & 5/5 \\
Fault & 51.392 (1.916) & 5/5 \\
\bottomrule
\end{tabular}

\caption{Relative increase in categorical endpoint risk under restricted attention. Entries are means with sample standard deviations over five random seeds, together with the number of seeds yielding a positive paired difference.}
\label{tab:supp-categorical-endpoint-risk}
\end{table}

Both configurations use adaptive mixture processing and are evaluated using common interpolation draws over the same fixed held-out observations. Table~\ref{tab:supp-categorical-endpoint-risk} reports a positive mean increase on every dataset, with positive paired effects across all evaluations. The endpoint-risk advantage of joint attention therefore holds consistently across the datasets. 

The second analysis considers all six evaluation metrics under both processing configurations. Let $Y_{p,j,d,s}^{(m)}$ denote metric $m$ under processing configuration $p\in\{\mathrm{adaptive},\mathrm{standard}\}$ and attention configuration $j\in\{\mathrm{joint},\mathrm{restricted}\}$. To ensure that positive values consistently favor joint attention, define
\begin{align*}
\eta_d^{(m)}
&=
\begin{cases}
-1, & \text{if $m$ is regression MLE},\\
1,  & \text{otherwise},
\end{cases}
\label{eq:metric-orientation}
\end{align*}
and average the paired effect equally over the two processing configurations
as
\begin{align*}
\Delta_{d,s}^{(m)}
&=
\frac{\eta_d^{(m)}}{2}
\sum
\left(
Y_{p,\mathrm{joint},d,s}^{(m)}
-
Y_{p,\mathrm{restricted},d,s}^{(m)}
\right).
\end{align*}

Joint attention consistently improves Trend, $\beta$-Recall, and task-oriented MLE across the matched evaluations. As shown in Table~\ref{tab:supp-joint-attention-effects}, the mean effect on $\alpha$-Precision is positive for every dataset, whereas the effects on Shape and C2ST remain small and vary in sign. These results indicate that joint attention contributes most consistently to dependence preservation, support coverage, and downstream utility. Together with the controlled experiment, the cross-dataset results show that this benefit persists across heterogeneous benchmark datasets.

\end{document}